\date{August 2010}
\newcommand{\bx}{\mathbf{x}}
\newcommand{\cX}{\mathcal{X}}
\newcommand{\bD}{\mathbf{D}}
\newcommand{\cD}{\mathcal{D}}
\newcommand{\st}{\mbox{s.t.}}
\newcommand{\blambda}{\boldsymbol{\lambda}}
\newcommand{\fixme}[1]{}
\newtheorem{lemma}{Lemma}
\newtheorem{theorem}{Theorem}
\newtheorem{definition}{Definition}
\begin{document}

\title{Maximum entropy models and subjective interestingness: an application to tiles in binary databases}

\author{Tijl De Bie\\ Intelligent Systems Laboratory, University of Bristol \\ \texttt{tijl.debie@gmail.com}}

\maketitle

\begin{abstract}
Recent research has highlighted the practical benefits of subjective interestingness measures, which quantify the novelty or unexpectedness of a pattern when contrasted with any prior information of the data miner \citep{SiT:95,GeH:06}. A key challenge here is the formalization of this prior information in a way that lends itself to the definition of an interestingness subjective measure that is both meaningful and practical.

In this paper, we outline a \emph{general strategy} of how this could be achieved, before \emph{working out the details for a use case} that is important in its own right.

Our general strategy is based on considering prior information as constraints on a probabilistic model representing the uncertainty about the data. More specifically, we represent the prior information 
by the maximum entropy (MaxEnt) distribution subject to these constraints. We briefly outline various measures that could subsequently be used to contrast patterns with this MaxEnt model, thus quantifying their subjective interestingness.

We demonstrate this strategy for rectangular databases with knowledge of the row and column sums. This situation has been considered before using computation intensive approaches based on swap randomizations, allowing for the computation of empirical p-values as interestingness measures \citep{swap07}. We show how the MaxEnt model can be computed remarkably efficiently in this situation, and how it can be used for the same purpose as swap randomizations but computationally more efficiently. More importantly, being an explicitly represented distribution, the MaxEnt model can additionally be used to define analytically computable interestingness measures, as we demonstrate for tiles \citep{tiles04} in binary databases.

{\bf Keywords:} Maximum Entropy Principle, Subjective Interestingness Measures, Prior Information, Rectangular Databases, Swap Randomizations
\end{abstract}

\section{Introduction}

\subsection{Prior work on subjective interestingness}\label{prior_si}

\paragraph{Prior information and interestingness of patterns}
Data mining practitioners commonly have a partial understanding of the structure of the data investigated. The goal of the data mining process is then to discover any additional structure or patterns the data may exhibit. Unfortunately, structure that is trivially implied by the prior information available is often overwhelming, and it is hard to design data mining algorithms that look beyond it.

For example, it should not be seen as a surprise that items known to be frequent in a binary database are jointly part of many transactions, as this is what should be expected even under a model of independence. Rather than discovering such patterns that are implied by prior information, data mining is concerned with discovering departures from this prior information.

Interestingness measures that take into account prior information in this way are commonly referred to as subjective interestingness measures, first introduced as a concept in \citet{SiT:95}. In contrast with objective interestingness measures (such as the support of an itemset and the confidence of an association rule), they do not depend on the data alone but also on the prior information of the data miner. An excellent overview of subjective and objective interestingness measures for data mining can be found in \citet{GeH:06}.

To define subjective interestingness measures, the ability to formalize prior information is as important as the ability to contrast patterns with this information thus formalized. In this paper, we mainly focus on the first of these challenges: the task of designing appropriate models incorporating prior information in data mining contexts. However, we will also outline various possible approaches of how such a background model can be used to define subjective interestingness measures, and we will demonstrate one in greater detail on a practical use case.

\paragraph{Prior work on subjective interestingness measures}
Several authors have already suggested ways to incorporate prior information in the data mining process for this purpose of defining subjective interestingness measures.

In \citet{SiT:95}, which introduces the idea of subjective interestingness measures, and in later work \citep[e.g.][]{PaT:98,PaT:00}, prior information is formalized as a set of beliefs, each of which holds with a certain confidence. The beliefs they consider are of the form of rules $X\rightarrow Y$ where $X$ and $Y$ are conjunctions of literals. Patterns in the form $A\rightarrow B$ are then assessed for unexpectedness in a well-defined way with respect to each belief. A disadvantage of this approach is that it is local in that each belief is treated independently of the others. Furthermore, it is specifically designed for patterns in the form of rules.

An approach that overcomes these problems was proposed in \citet{Jas:04}, still for binary databases. They propose to use a Bayesian network model for the transactions to formalize background knowledge. They then use this model to compute the difference between the expected frequency of an itemset and its observed frequency in the data as a subjective measure of interestingness.

Despite its potential, this approach suffers from a few limitations. First, it may not always be clear how the Bayesian network needs to be designed to accord with the prior information. The approach is particularly impractical for data mining practitioners unfamiliar with Bayesian networks. Second, it treats transactions as i.i.d. random variables. And third and probably most seriously, the variables in the Bayesian network are the items (or attribute-values), such that prior information on individual transactions cannot be taken into account.

An approach that resolves all three these problems, albeit for particular types of data and prior information, is presented in \citet{swap07}. In this work, the authors show how one can assess the significance of data mining results in binary databases with respect to prior information on the row and column sums. Their methodology relies on swap randomizations, which leave the row and column sums invariant. By iteratively applying swap randomizations they show how one can approximately sample from the uniform distribution over all databases with row and column sums as specified by the prior information. This can be used by computationally intensive approaches \citep[e.g.][]{compstat05} for estimating the significance of data mining results as quantified by the empirical p-value. Later this work was extended to real-valued data \citep{OVK:08} and to more complex constraints besides row and column sums \citep{HOV:09}.

The statistical assessment of data mining results using the randomization methods from \citet{swap07}, \citet{OVK:08}, \citet{Man:08}, and \citet{HOV:09} is extremely useful and deserves a central place in data mining practice. However, it would be even more useful if a model for prior information could be used to directly guide data mining algorithms toward the subjectively more interesting patterns. Unfortunately, from a practical point of view, the use of models that are defined implicitly in terms of invariants seems limited to \emph{post-hoc} analyses. Indeed, it seems hard to scale algorithms that need to explore an entire search space of possible patterns if they need to assess each candidate by means of a randomization test or by referring to a large number of randomized data sets. Thus it is unclear if and how they could be used to define practical measures of interestingness other than empirical p-values. A further and at least as serious disadvantage of randomization methods is that their resolution is limited by the inverse of the number of randomized data sets considered. This is a problem in the highly relevant region of small p-values where a high resolution is important.

In contrast to this, an explicit analytical model capable of formalizing important types of prior information would enable one to assess patterns in an analytical way rather than in a computationally intensive way. Interestingness could then be quantified using exact hypothesis testing as in \citet{Gallo07,gallo09}, where a relatively simple independence model for items and transactions was used as a null model formalizing prior information. Alternatively, information theoretic principles could be applied to quantify the information content of a pattern, as done in \citet{Item06Siebes} for defining an objective interestingness measure, but then with respect to a background model defined by the prior information. We will argue that the results in this paper will make this possible.

\subsection{Contributions in this paper}

In this paper, we present a methodology for efficiently computing \emph{explicitly representable} probabilistic models for \emph{general types of data}, able to incorporate \emph{broad classes of prior information}. Our approach is based on the maximum entropy (MaxEnt) principle \citep{Jaynes82}. In Sec.~\ref{framework}, we first sketch the methodology in its full generality, and we briefly outline various ways in which such a MaxEnt model could be used to define subjective interestingness measures. This general framework is the first contribution in this paper.

In the second part of the paper we demonstrate this approach for rectangular databases with constraints on the row and column marginals as prior information, and for patterns in the form of tiles \citet{tiles04}. The purpose of this second part is twofold. First, this particular use case is important in its own right, and has received a significant amount of attention in the literature \cite[e.g.][]{swap07,OVK:08,HOV:09}. Second, we hope that elaborating on this use case may support and clarify the general approach outlined in Sec.~\ref{framework}, thus underscoring its wider potential for the definition of subjective interestingness measures also in other situations.

This second part of the paper is structured as follows. In Sec.~\ref{mainsection}, we derive the MaxEnt model for rectangular databases under row and column sum constraints, and show how it can be computed remarkably efficiently. We do this for binary, positive integer-valued and positive real-valued databases as it comes at virtually no extra cost as compared to just dealing with binary data. In Sec.~\ref{invariancesection}, we relate these MaxEnt models to distributions defined implicitly by swap randomizations. In particular, we prove invariance of these MaxEnt models to a generalized type of swap randomization. In Sec.~\ref{use}, we show that it is computationally cheap to sample randomized databases from the MaxEnt model, such that it can be used as an efficient alternative to swap randomizations. More importantly, we show how it can be used to define subjective interestingness of tile patterns with respect to prior information on the row and column sums of the database. In  Sec.~\ref{discussion} we point out some interesting relations with literature. And in Sec.~\ref{experiments} we provide experiments demonstrating the efficiency and scalability as well as experiments to assess usefulness of the subjective interestingness measure for tiles.

This paper significantly extends two unpublished technical reports \citep{DeB:09a,DeB:09b}, the first one about the maximum entropy modeling approach, the second one primarily introducing a subjective interestingness measure of which the one in the present paper is a refinement.

\section{Formalizing prior information, and subjective interestingness measures: a general approach}\label{framework}

In this Section, we introduce the MaxEnt modeling strategy at a general level (Sec.~\ref{maxent}), and outline ways in which such a probabilistic representation for prior information could be used to define subjective interestingness measures for patterns (Sec.~\ref{maxent_for_interestingness}).

We should stress that in this Section, we have no intention to be overly specific or focused on a particular type of data, prior information, or pattern type. Instead, our goal is to outline some general principles and ideas, centred around the formalization of prior information in a MaxEnt model. To become practical, these ideas need be developed and specified further in additional research, and we demonstrate this for a particular use case in the later Sections in this paper.

\subsection{The maximum entropy principle to model prior information}\label{maxent}

Here we will introduce the maximum entropy principle and its use for modeling prior information in full generality. In Sec.~\ref{mainsection} we will then apply this to the special case of rectangular databases and prior information on the row and column sums.

Let $\cX$ be any countable set.\footnote{The results below can easily be extended for measurable sets as well, but we present them for countable sets for notational simplicity.} Consider the problem of finding a probability distribution $P$ over
the data $\bx\in\cX$ that satisfies a set of linear constraints implied by prior information. In particular, we will consider constraints of the form:
\begin{eqnarray}
\sum_\bx P(\bx)f_i(\bx)&=&d_i,\label{constraints_f}
\end{eqnarray}
where $f_i$ are real-valued functions of the data. Regarding these functions $f_i$ as properties of the data, these constraints could be the formalization of certain `expectations' (in both the formal and informal meaning of the word) of a data miner about these properties in the data. As mentioned earlier, we will give a specific example in Sec.~\ref{mainsection}, but for now we will focus on the implications of such a set of constraints on the shape of the probability distribution. 

In general, these constraints will not be sufficient to uniquely
determine the distribution of the data. A common strategy to
overcome this problem is to search for the distribution that has the
largest entropy subject to these constraints, to which we will refer
as the MaxEnt distribution. Mathematically, it is found as the
solution of:
\begin{eqnarray}
&\max_{P(\bx)}& -\sum_\bx P(\bx)\log P(\bx),\label{maxentobjective}\\
&\st& \sum_\bx P(\bx)f_i(\bx)=d_i,\ (\forall i)\label{ci_general}\\
&& \sum_\bx P(\bx)=1,\label{c1_general}
\end{eqnarray}
where the last constraint ensures that $P(\bx)$ is properly normalized.

Originally advocated in \citet{Jaynes57,Jaynes82} as a
generalization of Laplace's principle of indifference, the choice for the MaxEnt
distribution can be defended in a variety of ways. The most common
argument is that any distribution other than the MaxEnt distribution
effectively makes additional assumptions about the data that reduce
the entropy. As making additional assumptions biases the
distribution in undue ways, the MaxEnt distribution is the safest
bet.

A lesser known argument, but not less convincing, is a
game-theoretic one \citep{Top:79}. Assuming that the true data
distribution satisfies the given constraints, it remarks that the
Shannon-optimal compression code (e.g. Huffman) designed based on the MaxEnt
distribution minimizes the worst-case expected coding length of a
message coming from the true distribution. Hence, using the MaxEnt
distribution for designing a code is optimal in a robust minimax
sense.

Besides these motivations for the MaxEnt principle, it is also
relatively easy to compute a MaxEnt model. Indeed, the MaxEnt optimization problem (\ref{maxentobjective}-\ref{c1_general}) is convex, and can be solved using standard techniques from convex optimization theory \citep{boyd04}. Let us use Lagrange
multiplier $\mu$ for constraint~(\ref{c1_general}) and $\lambda_i$
for constraints~(\ref{ci_general}). Using $\blambda$ to denote the vector containing all Lagrange multipliers $\lambda_i$, the Lagrangian is then equal to:
\begin{eqnarray}
L(\mu,\blambda,P(\bx))&=&-\sum_\bx P(\bx)\log P(\bx)\label{lagrangian}\\
&&+\sum_i\lambda_i\left(\sum_\bx P(\bx)f_i(\bx)-d_i\right)+\mu\left(\sum_\bx P(\bx)-1\right).\nonumber
\end{eqnarray}
Equating the derivative w.r.t. $P(\bx)$ to $0$ yields the
optimality conditions:
\begin{eqnarray*}
\log P(\bx) &=& \mu-1+\sum_i \lambda_i f_i(\bx),\\
\Leftrightarrow P(\bx)&=&\frac{1}{Z}\exp\left(\sum_i\lambda_i f_i(\bx)\right),
\end{eqnarray*}
where we introduced a new variable $Z=\exp(1-\mu)$. The normalization constraint $\sum_\bx P(\bx)=1$ is often imposed constructively by choosing $Z$ to be an appropriate function of the other Lagrange multipliers $\blambda$, in particular: \begin{eqnarray}
Z(\blambda)&=&\sum_\bx\exp\left(\sum_i\lambda_i f_i(\bx)\right).\label{partition_function}
\end{eqnarray}
The function $Z(\blambda)$ is known as the partition function. This leads to the final form of the MaxEnt distribution as a function of the Lagrange multipliers $\blambda$:
\begin{eqnarray}\label{exponential_family}
P(\bx)&=&\frac{1}{Z(\blambda)}\exp\left(\sum_i\lambda_i f_i(\bx)\right).
\end{eqnarray}

The resulting model is a member of the exponential family of distributions, such that all existing theory for this family of distributions can be used \citep[e.g.][]{WaJ:08}. The optimal values of the Lagrange multipliers $\blambda$ can be found by minimizing the Lagrange dual objective. This Lagrange dual is obtained by substituting Eq.~(\ref{exponential_family}) for $P(\bx)$ in the Lagrangian  (Eq.~(\ref{lagrangian})). After some algebra:
\begin{eqnarray}
L(\blambda)&=&\log(Z(\blambda))-\sum_i\lambda_i d_i.\label{lagrange_dual}
\end{eqnarray}
Minimizing $L(\blambda)$ thus yields the values for the Lagrange multipliers and thus the MaxEnt distribution. In passing we note that it is easy to see that $L(\blambda)$ is equal to the negative log-likelihood of distribution from Eq.~(\ref{exponential_family}) on data $\bx$ that satisfies $f_i(\bx)=d_i$. Hence, the MaxEnt distribution subject to constraints~(\ref{ci_general}) is equivalent to the maximum likelihood distribution of the form~(\ref{exponential_family}) fitted to data for which the constraints $f_i(\bx)=d_i$ hold deterministically rather than in expectation  \citep[e.g.][]{WaJ:08}.

The log-partition function $\log(Z(\blambda))$ is well-known to be a convex function, such that $L(\blambda)$ is convex as well. Thanks to this, it turns out that minimizing $L(\blambda)$ can be done efficiently for a broad class of constraints, using standard techniques for convex optimization (see Sec.~\ref{sec_lagrange}) or using special purpose techniques such as Iterative Proportional Fitting \citep[e.g.][]{WaJ:08}.

A full discussion of the efficiency of the optimization of the Lagrange multipliers in the most general form of the problem is beyond the scope of this paper. Let us just point out that many results from the graphical models literature can be borrowed to establish tractability results. Rather than staying at the general level, we choose to fully explore a specific type of data and prior information of particular interest to data mining in Sec.~\ref{mainsection} below.

\subsection{Using the MaxEnt model to define subjective interestingness measures}\label{maxent_for_interestingness}

Given a representation of prior information in the form the MaxEnt distribution, subjective interestingness of a pattern can be quantified by contrasting it with the MaxEnt distribution. This can be done by computing some measure of unexpectedness of the pattern e.g. using hypothesis testing, or by relying on information theory. Here is a non-exhaustive list of possibilities:
\begin{enumerate}
\item \emph{Self-information.} A first option for quantifying interestingness of a pattern relies on the probability of the pattern under MaxEnt model. The smaller this probability is, the more surprising the pattern when contrasted with the prior information. Equivalently, the negative log-probability can be used, which represents the coding length required to encode the pattern with a Shannon optimal code with respect to the MaxEnt model. In Shannon's information theory, such a negative log-probability is known as the self-information \citep[e.g.][]{CoT:92}. The larger this quantity, the more information the pattern contains. Interestingly, in thermodynamics the self-information is also known as \emph{surprisal} \citep{Tri:61}.
\item \emph{Information compression ratio.} The self-information of a pattern does not take into account the complexity of \emph{describing} or \emph{communicating} the pattern to the data miner. This complexity could be formalized as the description length of the pattern in a code that assigns longer code-words to patterns that are perceived as more complex by the data miner. Then, we can define a subjective interestingness measure as the ratio of the self-information of the pattern given the MaxEnt model and the pattern's description length. This would correspond to some kind of compression ratio: how much information is compressed in the description of a pattern?
\item \emph{P-value} A third option is the probability of the pattern or a stronger instantiation of the pattern to be present in the data, with respect to the MaxEnt model as null hypothesis. This probability is known as a p-value in statistics \citep[e.g.][]{LeR:05}, and computing the p-value is at the core of hypothesis testing. With the MaxEnt model as null hypothesis, patterns with a small p-value are then those that are maximally surprising given the prior information embedded in it. Hence, in a certain well-defined sense these will be unexpected to the data miner.
\item \emph{P-value based on the likelihood ratio test.} To use the previous approach, a notion of \emph{pattern strength} needs to be chosen as a test statistic, such as the frequency of an itemset. Perhaps a more principled approach is by relying on the ratio of the probability of the data under the MaxEnt model, versus the probability of the data under an augmented model that is corrected for the presence of the found pattern. The augmented model can also be found using MaxEnt, with an additional constraint for the fact that the pattern is there. Based on this \emph{likelihood ratio}, a p-value can be computed using a likelihood ratio test, if certain regularity conditions are satisfied \citep[e.g.][]{LeR:05}.
\end{enumerate}

It seems hard to discuss any of these approaches more formally without being more specific about the particular type of pattern concerned. For this reason, in Sec.~\ref{use_case} we will demonstrate the second option (Information compression ratio), which we regard as particularly promising, for the particular of tiles in binary databases. Working out the details of the other approaches, and the connections between them, could be the subject of further research on the topic of subjective interestingness.

\section{MaxEnt Distributions for Rectangular Databases}\label{mainsection}

In the rest of the paper, we will elaborate on the details of the outlined general approach for specific types of data, prior information, and patterns. In the current Section, we will apply the general MaxEnt modeling strategy to the important case of rectangular databases. To this end, we will cast the prior information in the general form of Eq.~(\ref{constraints_f}). We will investigate the specific form of the resulting MaxEnt model, and show how it can be fitted in a remarkably efficient way.

\subsection{Notation}

In the rest of this paper, we will denote the database using the matrix $\bD$ with $m$ rows and $n$ columns. To maintain generality, we will assume that all matrix values belong to some specified set $\cD\subseteq\mathbb{R}^+$, i.e. $\bD(i,j)\in\cD$. Later we will choose the set $\cD$ to be the set $\{0,1\}$ (to model binary databases), the set of positive integers (to model integer-valued databases), or the set of positive reals (to model real-valued databases). Other choices can be made, and it is fairly straightforward to adapt the derivations accordingly. For notational simplicity, in the subsequent derivations we will assume that $\cD$ is discrete and countable. However, if $\cD$ is continuous the derivations can be adapted easily.

\subsection{Swap randomizations and prior information for databases}\label{priorinf}

For binary databases, it has been argued that row and column sums can often be assumed as prior information.\footnote{We refer to \citet{swap07} for a detailed argumentation, and to the experiments in Sec.~\ref{exp_tiles} of this paper for a particular use case} Any pattern that can be explained by referring to row or column sums in a binary database is then deemed uninformative. Previous work has introduced ways to assess data mining results based on this assumption \citep{swap07,Man:08,HOV:09}. These methods rely on the ability to sample random databases from the uniform distribution over all databases that satisfy the prior information. To assess a frequent itemset in the given database, they then compute the empirical p-value as a subjective interestingness measure, defined as the fraction of the random databases in which the itemset is at least as frequent as in the given database.

Unfortunately, sampling from this uniform distribution cannot be done in a direct way. To overcome this, the authors randomize the given database by iteratively applying elementary randomization operations: so-called swap randomizations that transform any $2\times 2$ submatrix of the form $\left(\begin{array}{cc}1&0\\0&1\end{array}\right)$ into $\left(\begin{array}{cc}0&1\\1&0\end{array}\right)$. (See Fig.~\ref{swaps} for a graphical illustration.) Clearly, such operations leave the row and column sums invariant. Furthermore, \citet{swap07} showed how the limit distribution of a Markov chain of random swap operations is equal to the uniform distribution over all databases with the specified row and column marginals. Hence, one can approximately sample from this distribution by running this Markov chain for a sufficiently long time (although there are no theoretical results on convergence rates). The swap operation has later been generalized to deal with real-valued databases as well \citep{OVK:08}, and we will get back to this in Sec.~\ref{invariancesection}.

\begin{figure}
\begin{center}
\includegraphics[width=0.7\columnwidth, angle=270]{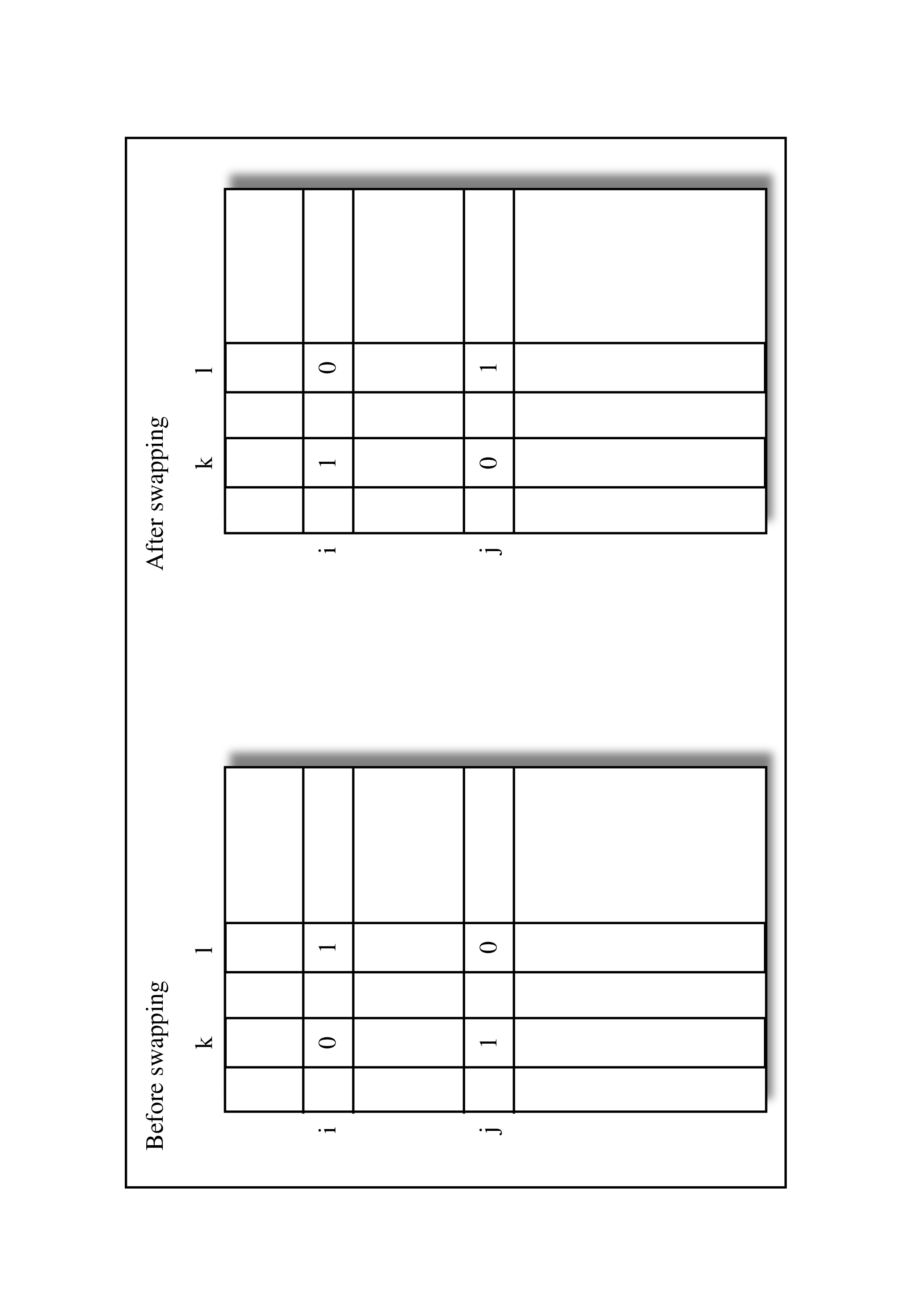}
\caption{The effect of a swap operation to a binary database.}\label{swaps}
\end{center}
\end{figure}

The models we will develop in this paper are based on exactly these invariants of the row and column sums, be it in a somewhat relaxed form: we will assume that the \emph{expected values} of the row and column sums are equal to specified values.
Mathematically, this can be expressed as:
\begin{eqnarray*}
\sum_{\bD\in\cD^{m\times n}} P(\bD)\left(\sum_j \bD(i,j)\right)&=&d^r_i,\\
\sum_{\bD\in\cD^{m\times n}} P(\bD)\left(\sum_i
\bD(i,j)\right)&=&d^c_j,
\end{eqnarray*}
where $d_i^r$ is the $i$'th expected row sum and $d_j^c$ the $j$'th
expected column sum. Although they have been suggested for binary
databases \citep{swap07}, and
later extended to real-valued databases \citep{OVK:08}, we will
explore the consequences of these constraints in broader generality,
for various choices for $\mathcal{D}$.

Importantly, it is easy to verify that these constraints are exactly
of the type of Eq.~(\ref{constraints_f}), such that the MaxEnt
formalism is directly applicable.

\subsection{MaxEnt matrix distributions with given expected row and column sums}

The MaxEnt distribution over the set of $m\times n$ matrices $\bD$ subject to constraints on the expected row and column sums is thus found by solving:
\begin{eqnarray}
&\max_{P(\bD)}& -\sum_{\bD} P(\bD)\log(P(\bD)),\nonumber\\
&\st& \sum_{\bD}
P(\bD)\left(\sum_j \bD(i,j)\right)=d^r_i,\label{ci}\\
&& \sum_{\bD} P(\bD)\left(\sum_i
\bD(i,j)\right)=d^c_j,\label{cj}\\
&& \sum_{\bD} P(\bD)=1.\label{c1}
\end{eqnarray}

As shown in Sec.~\ref{maxent}, the resulting distribution will belong to the exponential family, and will be of the form of Eq.~(\ref{exponential_family}). Using Lagrange multipliers $\lambda_i^r$ for constraints~(\ref{ci}) and $\lambda_j^c$ for constraints~(\ref{cj}) this yields:
\begin{eqnarray}
P(\bD)&=&\frac{1}{Z(\blambda^r,\blambda^c)}\exp\Bigg[\sum_i \lambda_i^r \left(\sum_j \bD(i,j)\right)+\sum_j \lambda_j^c \left(\sum_i \bD(i,j)\right)\Bigg],\label{PD1first}\\
&=&\frac{1}{Z(\blambda^r,\blambda^c)}\exp\Bigg[\sum_{i,j}\bD(i,j)(\lambda_i^r+\lambda_j^c)\Bigg],\nonumber\\
&=&\frac{1}{Z(\blambda^r,\blambda^c)}\prod_{i,j}\exp\left(\bD(i,j)(\lambda^r_i+\lambda^c_j)\right),\label{PD1}
\end{eqnarray}
where $Z(\blambda^r,\blambda^c)$ is the partition function,
$\blambda^r$ is the vector of Lagrange multipliers $\lambda_i^r$ for constraints
(\ref{ci}), and $\blambda^c$ the vector of Lagrange multipliers $\lambda_j^c$ for constraints (\ref{cj}).

Let us have a more detailed look at the partition function. Following Eq.~(\ref{partition_function}), it is equal to:
\begin{eqnarray*}
Z(\blambda^r,\blambda^c) &=&
\sum_{\bD\in\cD^{m\times n}}\prod_{i,j}\exp\left(\bD(i,j)(\lambda^r_i+\lambda^c_j)\right),\\
&=& \prod_{i,j}\sum_{\bD(i,j)\in\cD}\exp\left(\bD(i,j)(\lambda^r_i+\lambda^c_j)\right),\\
&=& \prod_{i,j}Z(\lambda^r_i,\lambda^c_j),
\end{eqnarray*}
where $Z(\lambda^r_i,\lambda^c_j)=
\sum_{\bD(i,j)\in\cD}\exp\left(\bD(i,j)(\lambda^r_i+\lambda^c_j)\right)$.

Plugging this into Eq.~(\ref{PD1}) yields:
\begin{eqnarray*}
P(\bD) &=&
\prod_{i,j}\frac{1}{Z(\lambda^r_i,\lambda^c_j)}\exp\left(\bD(i,j)(\lambda^r_i+\lambda^c_j)\right).
\end{eqnarray*}
This is a product of exponential family distributions for each of the elements in the matrix $\bD$. The partition function $Z(\blambda^r,\blambda^c)$ is the product of the partition functions $Z(\lambda_i^r,\lambda_j^c)$ of each of these individual distributions. Thus, we have proved the following Theorem.
\begin{theorem}
The MaxEnt distribution for matrices $\bD\in\cD^{m\times n}$ subject to constraints on the expected row and column sums is of the form:
\begin{eqnarray}
P(\bD) &=& \prod_{i,j}P_{ij}(\bD(i,j)),\label{eq_independent}
\end{eqnarray}
where
\begin{eqnarray}
P_{ij}(\bD(i,j)) =
\frac{1}{Z(\lambda^r_i,\lambda^c_j)}\exp\left(\bD(i,j)(\lambda^r_i+\lambda^c_j)\right)\label{Pindividual}
\end{eqnarray}
is a properly normalized probability distribution for the matrix element $\bD(i,j)$ at row $i$ and column $j$. Hence, the MaxEnt model factorizes as a product of independent distributions for the matrix elements.
\end{theorem}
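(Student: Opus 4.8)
The plan is to specialize the general MaxEnt derivation of Sec.~\ref{maxent} to the constraint functions that read off row and column sums, and then exploit the fact that the resulting exponent is a sum of terms each depending on a single matrix entry.

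First I would observe that the constraints (\ref{ci}) and (\ref{cj}) are instances of (\ref{constraints_f}), with one function per row, $f^r_i(\bD)=\sum_j\bD(i,j)$, and one per column, $f^c_j(\bD)=\sum_i\bD(i,j)$. Invoking Eq.~(\ref{exponential_family}) directly yields $P(\bD)\propto\exp\left(\sum_i\lambda^r_i\sum_j\bD(i,j)+\sum_j\lambda^c_j\sum_i\bD(i,j)\right)$. The key algebraic step is to regroup this finite double sum as $\sum_{i,j}\bD(i,j)(\lambda^r_i+\lambda^c_j)$, which is merely a reordering of finitely many terms, and then to write $\exp$ of the sum as the product $\prod_{i,j}\exp\left(\bD(i,j)(\lambda^r_i+\lambda^c_j)\right)$ — exactly the unnormalized form in Eq.~(\ref{PD1}). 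Each factor depends only on the single entry $\bD(i,j)$, which is what makes the factorization possible.

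The one step that needs care is the factorization of the partition function $Z(\blambda^r,\blambda^c)=\sum_{\bD\in\cD^{m\times n}}\prod_{i,j}\exp\left(\bD(i,j)(\lambda^r_i+\lambda^c_j)\right)$. Here I would note that summing over all matrices $\bD\in\cD^{m\times n}$ is summing over a Cartesian product of the per-entry value sets, so by the generalized distributive law the sum of the product equals the product of the sums, $\prod_{i,j}\sum_{v\in\cD}\exp\left(v(\lambda^r_i+\lambda^c_j)\right)=\prod_{i,j}Z(\lambda^r_i,\lambda^c_j)$. When $\cD$ is infinite this interchange is licensed because every summand is strictly positive (Tonelli for nonnegative series), provided each single-entry partition function $Z(\lambda^r_i,\lambda^c_j)$ is finite; this finiteness is precisely the condition that restricts the admissible domain of the Lagrange multipliers (for example $\lambda^r_i+\lambda^c_j<0$ when $\cD=\mathbb{R}^+$), which I would state as a standing assumption. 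I expect this convergence and domain bookkeeping to be the main — essentially the only — obstacle, the remainder being formal manipulation.

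Finally, dividing the product form of the numerator by the product form of $Z$ and matching factors gives $P(\bD)=\prod_{i,j}P_{ij}(\bD(i,j))$ with $P_{ij}$ as in Eq.~(\ref{Pindividual}). To close the argument I would verify that each $P_{ij}$ is a bona fide probability distribution on $\cD$: nonnegativity is immediate from the exponential, and $\sum_{v\in\cD}P_{ij}(v)=1$ holds by the very definition of $Z(\lambda^r_i,\lambda^c_j)$ as its normalizer. Hence the joint MaxEnt distribution factorizes as a product of independent per-entry exponential-family laws, which is the assertion of the Theorem.
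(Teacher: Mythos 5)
Your proposal is correct and follows essentially the same route as the paper: specialize Eq.~(\ref{exponential_family}) to the row/column-sum constraints, regroup the exponent as $\sum_{i,j}\bD(i,j)(\lambda^r_i+\lambda^c_j)$, and factor the partition function over entries by interchanging the sum over $\cD^{m\times n}$ with the product. Your explicit appeal to Tonelli and the finiteness condition on each $Z(\lambda^r_i,\lambda^c_j)$ is a slightly more careful treatment of a step the paper performs without comment (and only addresses later via the conditions $\lambda^r_i+\lambda^c_j<0$ for the geometric and exponential cases), but it is the same argument.
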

It is important to stress that we did not impose
independence at the outset. The independence is a consequence of the
MaxEnt objective.

Various particular choices for $\cD$ will lead to various distributions, with
appropriate values for the normalization constant
$Z(\lambda^r_i,\lambda^c_j)$. Let us discuss three examples in detail.

\paragraph{Bernoulli} For $\cD=\{0,1\}$, the partition function \begin{eqnarray*}
Z(\lambda^r_i,\lambda^c_j)&=&1+\exp\left(\lambda^r_i+\lambda^c_j\right),
\end{eqnarray*}
such that:
\begin{eqnarray}
P_{ij}(\bD(i,j))&=&\left\{\begin{array}{ll}\frac{\exp\left(\lambda^r_i+\lambda^c_j\right)}{1+\exp\left(\lambda^r_i+\lambda^c_j\right)},&\mbox{ if } \bD(i,j)=1,\\
\frac{1}{1+\exp\left(\lambda^r_i+\lambda^c_j\right)},&\mbox{ if } \bD(i,j)=0.\end{array}\right.\label{Pindividual_Bernoulli}
\end{eqnarray}
This means that for $\cD=\{0,1\}$ the MaxEnt
distribution for $\bD$ reduces to a product of independent Bernoulli
distributions, with probability of success equal to
$p_{ij}=\frac{\exp{(\lambda_i^r+\lambda_j^c)}}{1+\exp{(\lambda_i^r+\lambda_j^c)}}$
for $\bD(i,j)$.

\paragraph{Geometric} For $\cD=\mathbb{N}$, the partition function \begin{eqnarray*}
Z(\lambda^r_i,\lambda^c_j)&=&
\sum_{k=0}^\infty \exp\left(k(\lambda^r_i+\lambda^c_j)\right),\\
&=&\frac{1}{1-\exp(\lambda^r_i+\lambda^c_j)},
\end{eqnarray*}
assuming that $\lambda^r_i+\lambda^c_j<0$ to ensure convergence of the sum.
Thus:
\begin{eqnarray*}
P_{ij}(\bD(i,j))&=&\left[1-\exp(\lambda^r_i+\lambda^c_j)\right]\cdot
\exp\left(\bD(i,j)(\lambda^r_i+\lambda^c_j)\right),\\
&=&\left[1-\exp(\lambda^r_i+\lambda^c_j)\right]\cdot
\left[\exp(\lambda^r_i+\lambda^c_j)\right]^{\bD(i,j)}.
\end{eqnarray*}
This means that for $\cD=\mathbb{N}$ the MaxEnt distribution for $\bD$ reduces to a product of independent geometric distributions, with probability of success equal to $1-\exp(\lambda^r_i+\lambda^c_j)$
for the matrix element $\bD(i,j)$.

\paragraph{Exponential} For $\cD=\mathbb{R}^+$, the partition function \begin{eqnarray*}
Z(\lambda^r_i,\lambda^c_j)&=&
\int_{0}^\infty \exp\left(x(\lambda^r_i+\lambda^c_j)\right)dx,\\
&=&-\frac{1}{\lambda^r_i+\lambda^c_j},
\end{eqnarray*}
assuming that $\lambda^r_i+\lambda^c_j<0$ to ensure convergence of the integral. Thus:
\begin{eqnarray*}
P_{ij}(\bD(i,j))&=&-(\lambda^r_i+\lambda^c_j)\cdot
\exp\left(\bD(i,j)(\lambda^r_i+\lambda^c_j)\right).
\end{eqnarray*}
This means that for $\cD=\mathbb{R}$ the MaxEnt distribution for $\bD$ reduces to a product of independent exponential distributions, with rate parameter equal to $-(\lambda^r_i+\lambda^c_j)$ for the matrix element $\bD(i,j)$.

These results are summarized in Table~\ref{partition_functions_table}.
\begin{table}
\centering \caption{Three possible domains for the elements of
$\bD$, the corresponding partition functions in the MaxEnt
distribution $P(\bD)$ for the matrix element $\bD(i,j)$, and the
resulting type of distribution for the matrix
elements.}\label{partition_functions_table}
\begin{tabular}{|c|c|c|rl|}\hline
 $\cD$&Distribution & $Z(\lambda^r_i,\lambda^c_j)$ & Parameter: & Value\\\hline
 $\{0,1\}$&Bernoulli & $1+\exp(\lambda^r_i+\lambda^c_j)$ & success prob.: & $\frac{\exp(\lambda^r_i+\lambda^c_j)}{1+\exp(\lambda^r_i+\lambda^c_j)}$\\
 $\mathbb{N}$&Geometric & $\frac{1}{1-\exp(\lambda^r_i+\lambda^c_j)}$ & success prob.: & $1-\exp(\lambda^r_i+\lambda^c_j)$\\
 $\mathbb{R}^+$&Exponential & $-\frac{1}{\lambda^r_i+\lambda^c_j}$ & rate param.: & $-(\lambda^r_i+\lambda^c_j)$\\\hline
\end{tabular}
\end{table}

\subsection{Optimizing the Lagrange multipliers}\label{sec_lagrange}

We have now derived the shape of the models $P(\bD)$, expressed in
terms of the Lagrange multipliers, but we have not yet discussed how to compute the values of these Lagrange multipliers at the MaxEnt optimum.

In Sec.~\ref{maxent}, we outlined the general strategy to do this: the optimal values for the Lagrange multipliers are found by minimizing the Lagrange dual as given by Eq.~(\ref{lagrange_dual}). For concreteness, let us go through the mathematical details for the case of a rectangular binary matrix:  $\bD\in\{0,1\}^{m\times n}$. There should be no conceptual difficulties in adapting the derivations below for other choices of $\cD$, and for conciseness these adaptations are omitted from this paper.

For $\bD\in\{0,1\}^{m\times n}$, the Lagrange dual from Eq.~(\ref{lagrange_dual}) is equal to:
\begin{eqnarray*}
L(\blambda^r,\blambda^c)&=&\log(Z(\blambda^r,\blambda^c))-\sum_i\lambda_i^r d_i^r -\sum_j\lambda_j^c d_j^c.
\end{eqnarray*}
Using $Z(\blambda^r,\blambda^c)=\prod_{i,j}Z(\lambda_i^r,\lambda_j^c)$ and $Z(\lambda_i^r,\lambda_j^c)=1+\exp(\lambda_i^r+\lambda_j^c)$, this gives:
\begin{eqnarray*}
L(\blambda^r,\blambda^c)&=&\sum_{i,j}\log(Z(\lambda_i^r,\lambda_j^c))-\sum_i\lambda_i^r d_i^r -\sum_j\lambda_j^c d_j^c,\\
&=&\sum_{i,j}\log\left(1+\exp(\lambda_i^r+\lambda_j^c)\right)-\sum_i\lambda_i^r d_i^r -\sum_j\lambda_j^c d_j^c.
\end{eqnarray*}

The optimal values of the parameters are easily found using standard methods for unconstrained convex optimization such as Newton's method or (conjugate) gradient descent, possibly with a preconditioner \citep{CG,boyd04}. We will report computational results for two possible choices in Sec.~\ref{experiments}. Gradient descent type methods rely on the gradient of $L$, while Newton's method relies on the gradient as well as the Hessian. Both can easily be computed analytically. The gradient is determined by the first order partial derivatives:
\begin{eqnarray*}
\frac{\partial L} {\partial \lambda_i^r}&=&\sum_j\frac{\exp(\lambda_i^r+\lambda_j^c)}{1+\exp(\lambda_i^r+\lambda_j^c)}-d_i^r,\\
\frac{\partial L} {\partial\lambda_j^c}&=&\sum_i\frac{\exp(\lambda_i^r+\lambda_j^c)}{1+\exp(\lambda_i^r+\lambda_j^c)}-d_j^c.
\end{eqnarray*}
Note that these derivatives have a natural interpretation. Indeed, the sum $\sum_j\frac{\exp(\lambda_i^r+\lambda_j^c)}{1+\exp(\lambda_i^r+\lambda_j^c)}$ is equal to the expected number of ones in the $i$th column for the distribution with the current parameter values, and the partial derivative $\frac{\partial L} {\partial \lambda_i^r}$ is equal to the difference between that expected number and the value $d_i^r$ it needs to be as required by the constraints. The Hessian is determined by the second order partial derivatives, given by:
\begin{eqnarray*}
\frac{\partial^2 L}{\partial\lambda_i^r\lambda_k^r}&=&
\begin{array}{rl}0&\mbox{if } i\neq k\end{array},\\
\frac{\partial^2 L}{\partial\lambda_j^c\lambda_l^c}&=&
\begin{array}{rl}0&\mbox{if } j\neq l\end{array},\\
\frac{\partial^2 L}{\partial\lambda_i^{r2}}&=&
\sum_j\frac{\exp(\lambda_i^r+\lambda_j^c)}
{\left(1+\exp(\lambda_i^r+\lambda_j^c)\right)^2},\\
\frac{\partial^2 L}{\partial\lambda_j^{c2}}&=&
\sum_i\frac{\exp(\lambda_i^r+\lambda_j^c)}
{\left(1+\exp(\lambda_i^r+\lambda_j^c)\right)^2},\\
\frac{\partial^2 L}{\partial\lambda_i^r\lambda_j^c}&=&
\frac{\exp(\lambda_i^r+\lambda_j^c)}
{\left(1+\exp(\lambda_i^r+\lambda_j^c)\right)^2}.
\end{eqnarray*}

The number of Lagrange multipliers to be optimized over, which is crucial for the computational cost of e.g. Newton iterations, is equal to $m+n$. While this is sublinear in the size of the data $mn$, it is still a daunting number for practical sizes of databases. However, the computational and space complexity can often be further reduced, in particular when the numbers of distinct values of $d_i^r$ and of $d_j^c$ are small. Indeed, thanks to symmetry and convexity of $L$, if $d_i^r=d_k^r$ for specific $i$ and $k$ the corresponding optimal values of the Lagrange multipliers $\lambda_i^r$ and $\lambda_k^r$ will be equal as well, and the same goes for the elements of $\blambda^c$. In practice this allows one to drastically reduce the number of free variables, down to the sum of the number of \emph{distinct} expected row sums $d_i^r$ and the number of \emph{distinct} expected column sums $d_j^c$.

Especially for $\cD=\{0,1\}$, almost in all practical cases this allows for a massive reduction in computational complexity. The number of distinct row and column sums can be upper bounded in terms of the dimensions of $\bD$, the number of non-zero elements in $\bD$, and the largest row and column sums in $\bD$, as quantified by the following Lemmas.
\begin{lemma}
In a binary matrix $\bD\in\{0,1\}^{m\times n}$, the number of distinct row sums $\widetilde{m}$ is upper bounded by $\min(m,n+1)$ and the number of distinct column sums $\widetilde{n}$ is upper bounded by  $\min(m+1,n)$.
\end{lemma}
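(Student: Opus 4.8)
The plan is to bound the number of distinct row sums by two separate, obvious observations and then take the minimum. First, the row sum of each of the $m$ rows is one number, so the set of distinct row sums has at most $m$ elements; this gives the bound $\widetilde{m} \le m$ trivially. Second, each row sum $\sum_j \bD(i,j)$ is an integer in the range $\{0, 1, \ldots, n\}$, since every entry is $0$ or $1$ and there are $n$ columns; this set has exactly $n+1$ elements, so $\widetilde{m} \le n+1$. Combining, $\widetilde{m} \le \min(m, n+1)$. The argument for column sums is entirely symmetric: there are $n$ columns so $\widetilde{n} \le n$, and each column sum lies in $\{0, 1, \ldots, m\}$ so $\widetilde{n} \le m+1$, giving $\widetilde{n} \le \min(m+1, n)$.

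I would structure the write-up to handle the row-sum case in full and then simply remark that the column-sum case follows by transposing $\bD$ (which swaps the roles of $m$ and $n$). The two ingredients to state are: (i) a set of $m$ real numbers has cardinality at most $m$; (ii) the image of the map $i \mapsto \sum_j \bD(i,j)$ is contained in $\{0,1,\ldots,n\}$ because it is a sum of $n$ terms each in $\{0,1\}$.

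There is essentially no obstacle here — the statement is elementary and the proof is two lines. If I had to name the ``hard'' part, it is only making sure the $+1$ is placed correctly: the row sums range over $n+1$ possible values (including both $0$ and $n$), not $n$, and dually the column sums range over $m+1$ values; so the bound is $\min(m, n+1)$ for rows and $\min(m+1, n)$ for columns, not $\min(m,n)$ in either case. I would double-check this against a tiny example, e.g. a $1 \times 3$ matrix has $\widetilde{m} \le \min(1,4) = 1$ and $\widetilde{n} \le \min(2,3) = 2$, which is consistent with the fact that each of the $3$ columns has sum $0$ or $1$.

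A clean proof:
\begin{proof}
Each of the $m$ rows of $\bD$ has exactly one row sum, so $\widetilde{m}\le m$. Moreover, the $i$th row sum $\sum_{j=1}^n \bD(i,j)$ is a sum of $n$ terms each belonging to $\{0,1\}$, hence it is an integer in $\{0,1,\ldots,n\}$, a set of $n+1$ values; thus $\widetilde{m}\le n+1$. Combining the two bounds yields $\widetilde{m}\le\min(m,n+1)$. Applying the same argument to the transpose $\bD^\top\in\{0,1\}^{n\times m}$, whose row sums are the column sums of $\bD$, gives $\widetilde{n}\le\min(n,m+1)$.
\end{proof}
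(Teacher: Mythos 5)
Your proof is correct and follows exactly the same two-step argument as the paper: bound the number of distinct row sums by the number of rows $m$ on one hand, and by the $n+1$ possible integer values $0,1,\ldots,n$ on the other, then argue symmetrically for columns. The only difference is that you spell out the transposition argument, which the paper leaves implicit.
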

\begin{proof}
Let us prove it for row sums only. Clearly the number of distinct row sums is bounded by the total number of rows $m$. On the other hand, the only possible values of the row sums are $0,1,\ldots,n$, a total of $n+1$ distinct values.
\qed\end{proof}
\begin{lemma}\label{lemma_sqrts}
In a binary matrix $\bD\in\{0,1\}^{m\times n}$ with $\sum_{i,j}\bD(i,j)=s$ (i.e. with $s$ ones), the number of distinct row sums $\widetilde{m}$ is upper bounded by $\sqrt{2s}$, and the same holds for the number of distinct column sums $\widetilde{n}$.
\end{lemma}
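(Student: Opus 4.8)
The plan is to argue purely combinatorially about the multiset of row sums $r_i=\sum_j \bD(i,j)$, $i=1,\dots,m$, which are non-negative integers satisfying $\sum_{i=1}^m r_i = s$; the bound on the number of distinct column sums $\widetilde{n}$ then follows by applying exactly the same argument to the transpose of $\bD$. So I would only spell out the row-sum case.

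First I would name the distinct row-sum values $v_1<v_2<\cdots<v_{\widetilde{m}}$. Since each $v_k$ is attained by at least one row while every remaining row contributes a non-negative amount to the total, we get $s=\sum_{i=1}^m r_i\ge \sum_{k=1}^{\widetilde{m}} v_k$. The key step is then the elementary fact that the $k$-th smallest of $\widetilde{m}$ distinct positive integers is at least $k$, so $\sum_{k=1}^{\widetilde{m}} v_k\ge 1+2+\cdots+\widetilde{m}=\tfrac{\widetilde{m}(\widetilde{m}+1)}{2}\ge\tfrac{\widetilde{m}^2}{2}$. Combining the two inequalities gives $s\ge\tfrac{\widetilde{m}^2}{2}$, i.e. $\widetilde{m}\le\sqrt{2s}$, and likewise $\widetilde{n}\le\sqrt{2s}$.

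The one point that needs care — and essentially the only obstacle in an otherwise one-line estimate — is the possible value $0$ among the row sums, i.e. the presence of all-zero rows: if $0$ is allowed, the minimal sum of $\widetilde{m}$ distinct non-negative integers drops to $\tfrac{\widetilde{m}(\widetilde{m}-1)}{2}$, which only yields the slightly weaker $\widetilde{m}\le\tfrac{1+\sqrt{1+8s}}{2}$. I would dispose of this by remarking that all-zero rows (and columns) are degenerate for the MaxEnt model and may be discarded without loss of generality, after which the distinct row sums are genuinely positive and the clean argument applies; alternatively, one simply states the lemma for matrices with strictly positive row and column sums. Apart from this caveat the proof is just the Gauss-sum bound above, so no substantial technical difficulty remains.
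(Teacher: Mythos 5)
Your argument is essentially the same as the paper's: the paper also bounds $\widetilde{m}$ by observing that $\widetilde{m}$ distinct non-negative integer row sums, each attained at least once, force the total number of ones to be at least the Gauss sum of the smallest possible distinct values, and derives a contradiction with $\sum_i r_i = s$. The one place where you go beyond the paper is the caveat about the value $0$, and you are right to flag it: the paper's contradiction computes $\sum_{i=0}^{\sqrt{2s}} i = s+\sqrt{s/2}$, which tacitly treats $\sqrt{2s}$ as an integer and assumes the distinct sums run over $0,1,\ldots,\sqrt{2s}$; once one accounts for integrality, the presence of a zero row sum really does break the stated bound. For instance, a matrix with a single $1$ and one all-zero row has $s=1$ and two distinct row sums $\{0,1\}$, yet $\sqrt{2s}=\sqrt{2}<2$, so the lemma as literally stated fails. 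Your resolution — either discard all-zero rows and columns (harmless for the MaxEnt model, where they correspond to degenerate Lagrange multipliers) or accept the marginally weaker bound $\widetilde{m}\le\tfrac{1+\sqrt{1+8s}}{2}$ — is the correct fix, and neither change affects the complexity conclusions drawn from the lemma, since the bound only moves by an additive constant. So your proposal is not merely correct; it repairs a small but genuine gap in the paper's own proof.
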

\begin{proof}
Let us prove this bound for the number of rows by contradiction. Assume that the number of different row sums is larger than $\sqrt{2s}$. This means that the number of ones in $\bD$ is at least $\sum_{i=0}^{\sqrt{2s}}i=\frac{\sqrt{2s}+\sqrt{2s}^2}{2}=s+\sqrt{\frac{s}{2}}$, if the distinct row sums are $0,1,\ldots,\sqrt{2s}$ and no row sum is equal to another except those equal to $0$. Since $s+\sqrt{\frac{s}{2}}>s$, the assumption is incorrect and the number of different row sums cannot be larger than $\sqrt{2s}$.
\qed\end{proof}
\begin{lemma}
In a binary matrix $\bD\in\{0,1\}^{m\times n}$ with largest row and column sums equal to $d^r_{\mbox{\scriptsize{max}}}$ and $d^c_{\mbox{\scriptsize{max}}}$ respectively, the number of distinct row sums $\widetilde{m}$ and the number of distinct column sums $\widetilde{n}$ are upper bounded by $d^r_{\mbox{\scriptsize{max}}}+1$ and  $d^c_{\mbox{\scriptsize{max}}}+1$ respectively.
\end{lemma}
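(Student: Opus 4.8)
The plan is to mimic the proof of the first of the three Lemmas above, which already bounds the number of distinct row sums by the number of values they can take. The only new ingredient here is the observation that the largest row sum being $d^r_{\mbox{\scriptsize{max}}}$ restricts the range of attainable values further.

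First I would argue for the row sums. Each row sum $\sum_j \bD(i,j)$ is a nonnegative integer, since $\bD(i,j)\in\{0,1\}$, and by hypothesis it is at most $d^r_{\mbox{\scriptsize{max}}}$. Hence every row sum lies in the set $\{0,1,\ldots,d^r_{\mbox{\scriptsize{max}}}\}$, which contains exactly $d^r_{\mbox{\scriptsize{max}}}+1$ elements. The number of \emph{distinct} row sums $\widetilde{m}$ therefore cannot exceed $d^r_{\mbox{\scriptsize{max}}}+1$. The argument for the column sums is identical with rows and columns interchanged: every column sum lies in $\{0,1,\ldots,d^c_{\mbox{\scriptsize{max}}}\}$, giving $\widetilde{n}\le d^c_{\mbox{\scriptsize{max}}}+1$.

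There is essentially no obstacle here — the statement is an immediate counting observation about the range of integer-valued sums, and no case analysis or contradiction argument (as was needed for the $\sqrt{2s}$ bound in Lemma~\ref{lemma_sqrts}) is required. If anything, the only thing worth a half-sentence of care is noting that the bound is tight precisely when all values $0,1,\ldots,d^r_{\mbox{\scriptsize{max}}}$ actually occur as row sums, but this is not needed for the upper bound and can be omitted.
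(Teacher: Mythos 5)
Your proof is correct and is essentially identical to the paper's: both observe that each row (resp.\ column) sum is a nonnegative integer at most $d^r_{\mbox{\scriptsize{max}}}$ (resp.\ $d^c_{\mbox{\scriptsize{max}}}$), hence takes values in a set of $d^r_{\mbox{\scriptsize{max}}}+1$ (resp.\ $d^c_{\mbox{\scriptsize{max}}}+1$) elements. Nothing is missing.
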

\begin{proof}
This follows directly from the fact that row and column sums are integers larger than or equal to $0$ and at most equal to $d^r_{\mbox{\scriptsize{max}}}$ and $d^c_{\mbox{\scriptsize{max}}}$ respectively.
\qed\end{proof}
Combining these Lemmas yields the following Theorem.
\begin{theorem}\label{mnbound}
In a binary matrix $\bD\in\{0,1\}^{m\times n}$ with $\sum_{i,j}\bD(i,j)=s$ (i.e. with $s$ ones), and with largest row and column sums equal to $d^r_{\mbox{\scriptsize{max}}}$ and $d^c_{\mbox{\scriptsize{max}}}$ respectively, the following inequalities hold for the number of distinct row sums $\widetilde{m}$ and the number of distinct column sums $\widetilde{n}$:
\begin{eqnarray*}
\widetilde{m}&\leq& \min\left\{m,n+1,\sqrt{2s},d^r_{\mbox{\scriptsize{max}}}+1\right\},\\
\widetilde{n}&\leq& \min\left\{m+1,n,\sqrt{2s},d^c_{\mbox{\scriptsize{max}}}+1\right\}.
\end{eqnarray*}
\end{theorem}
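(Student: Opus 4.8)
The plan is to obtain Theorem~\ref{mnbound} as an immediate corollary of the three preceding Lemmas, since each of them supplies one of the four quantities appearing inside the minima. First I would observe that all three Lemmas refer to one and the same matrix $\bD\in\{0,1\}^{m\times n}$, and that the quantities $s=\sum_{i,j}\bD(i,j)$, $d^r_{\mbox{\scriptsize{max}}}$ and $d^c_{\mbox{\scriptsize{max}}}$ are all intrinsic to $\bD$. Hence the hypotheses of the three Lemmas are met simultaneously, and the bounds they produce for $\widetilde{m}$ and $\widetilde{n}$ hold at the same time.

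Next I would simply collect these bounds. The first Lemma gives $\widetilde{m}\le m$ and $\widetilde{m}\le n+1$; Lemma~\ref{lemma_sqrts} gives $\widetilde{m}\le\sqrt{2s}$; and the third Lemma gives $\widetilde{m}\le d^r_{\mbox{\scriptsize{max}}}+1$. Since $\widetilde{m}$ is bounded above by each of the four numbers $m$, $n+1$, $\sqrt{2s}$ and $d^r_{\mbox{\scriptsize{max}}}+1$, it is bounded above by their minimum, which is exactly the claimed inequality for $\widetilde{m}$. The inequality for $\widetilde{n}$ follows in the same way, using the column-sum halves of the three Lemmas: $\widetilde{n}\le m+1$, $\widetilde{n}\le n$, $\widetilde{n}\le\sqrt{2s}$ and $\widetilde{n}\le d^c_{\mbox{\scriptsize{max}}}+1$.

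There is essentially no obstacle here: the mathematical content resides entirely in the three Lemmas, and the only point requiring (minimal) care is the bookkeeping observation that their hypotheses are jointly satisfied, so that the four bounds may legitimately be intersected. If anything, the remaining work is cosmetic: one could, for instance, note that $\sqrt{2s}$ may be replaced by $\lfloor\sqrt{2s}\rfloor$ because $\widetilde{m}$ and $\widetilde{n}$ are integers, but such a sharpening is not needed for the statement as given and I would omit it.
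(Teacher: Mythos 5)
Your proposal is correct and matches the paper exactly: the theorem is stated there as an immediate consequence of combining the three preceding Lemmas, which is precisely your argument of intersecting the four simultaneously valid upper bounds for each of $\widetilde{m}$ and $\widetilde{n}$.
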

For $\cD$ the set of integers, similar bounds can be obtained.

Let us thus partition the rows in groups of equal value of $d_i^r$, and for the $k$'th group corresponding to expected row sum $\widetilde{d}_k^r$ let us use the Lagrange multiplier $\widetilde\lambda_k^r$, with $\widetilde{m}_k$ denoting the number of rows in that group. Similarly we can define $\widetilde{d}_k^r$, $\widetilde\lambda_l^c$ and $\widetilde{n}_l$. Then we can express the Lagrange dual as:
\begin{eqnarray*}
L(\widetilde\blambda^r,\widetilde\blambda^c)&=&
\sum_{k,l}\widetilde{m}_k \widetilde{n}_l\log \left(1+\exp(\widetilde\lambda_k^r+\widetilde\lambda_l^c)\right)
-\sum_k \widetilde{m}_k\widetilde\lambda_k^r \widetilde{d}_k^r -\sum_l \widetilde{n}_l\widetilde\lambda_l^c \widetilde{d}_l^c.
\end{eqnarray*}
The gradient is easily calculated to be given by:
\begin{eqnarray*}
\frac{\partial L} {\partial \widetilde\lambda_k^r}&=&\sum_l \widetilde{m}_k\widetilde{n}_l\frac{\exp(\widetilde\lambda_k^r+\widetilde\lambda_l^c)}{1+\exp(\widetilde\lambda_k^r+\widetilde\lambda_l^c)}-\widetilde{m}_k\widetilde{d}_k^r,\\
\frac{\partial L} {\partial\widetilde\lambda_l^c}&=&\sum_k \widetilde{m}_k\widetilde{n}_l\frac{\exp(\widetilde\lambda_k^r+\widetilde\lambda_l^c)}{1+\exp(\widetilde\lambda_k^r+\widetilde\lambda_l^c)}-\widetilde{n}_l\widetilde{d}_l^c,
\end{eqnarray*}
and the Hessian elements are computed in a similar way.

The computational complexity of computing the gradient as well as the Hessian are $O(\widetilde{m}\widetilde{n})$. Applying Newton's method then requires solving a linear system of $\widetilde{m}+\widetilde{n}$ equations, with computational complexity $O((\widetilde{m}+\widetilde{n})^3)$ although more efficient approximation methods such as conjugate gradient can be used \citep[e.g.][]{boyd04}. Combining this with Theorem~\ref{mnbound} yields an overall worst-case complexity of at most $O(\sqrt{s^3})$ per iteration for Newton's method and at most $O(s)$ for first order methods such as gradient descent. The space complexity for Newton's method is determined by the size of the Hessian, such that it is bounded by $O(s)$ and thus of the order of the size of the database in sparse representation. For gradient-based or conjugate gradient methods, it is bounded by the size of the gradient $O(\sqrt{s})$. As we will see in the experiments, these results make the MaxEnt approach amenable for practical problems of very large scale.

The reader may be left with one concern: the fact that the MaxEnt model is
a product distribution of independent distributions for each
$\bD(i,j)$ seems to suggest that parameters need to be stored for each of these $m\times n$ element distributions. However, it should be pointed out that one does not need to store the value of $\lambda_i^r+\lambda_j^c$ for each pair of $i$ and $j$. It suffices to store just the $\lambda_i^r$ and
$\lambda_j^c$ to compute the probabilities for any $\bD(i,j)$ in
constant time. Hence, the space required to store the resulting
model is $O(m+n)$, sublinear in the size of the data.

\section{The Invariance of the MaxEnt Matrix Distribution to
$\delta$-Swaps}\label{invariancesection}

The MaxEnt models introduced in Sec.~\ref{mainsection} are explicitly represented probability distributions. As a result, they are useful for defining analytically computable measures of interestingness, as outlined in Sec.~\ref{maxent_for_interestingness}, and we will demonstrate this by designing a concrete interestingness measure in Sec.~\ref{use_case}. Still, it is instructive to point out some relations between our MaxEnt models and the previously proposed swap randomization approaches and generalizations.

\subsection{$\delta$-swaps: a randomization operation on matrices}

First, let us generalize the definition of a swap as follows.
\begin{definition}[$\delta$-swap]
Given an $m\times n$ matrix $\bD$, a $\delta$-swap for rows $i,k$
and columns $j,l$ is the operation that adds a fixed number $\delta$
to $\bD(i,k)$ and $\bD(j,l)$ and subtracts the same number from
$\bD(i,l)$ and $\bD(j,k)$.
\end{definition}

Of course, for a $\delta$-swap to be useful, it must be ensured that
$\bD(i,j)+\delta,\bD(k,j)-\delta,\bD(i,l)-\delta,\bD(k,l)+\delta\in\cD$.
We will refer to such $\delta$-swaps as allowed $\delta$-swaps.
\begin{definition}[Allowed $\delta$-swap]
A $\delta$-swap for rows $i,k$ and columns $j,l$ is said to be
allowed for a given matrix $\bD$ over the domain $\cD$ iff
$\bD(i,j)+\delta,\bD(k,j)-\delta,\bD(i,l)-\delta,\bD(k,l)+\delta\in\cD$.
\end{definition}

Clearly, an allowed $\delta$-swap leaves the row and column sums
invariant. The following Theorem is more interesting.
\begin{theorem}
The probability of a matrix $\bD$ under the MaxEnt distribution
subject to equality constraints on the expected row and column sums
is invariant under allowed $\delta$-swaps applied to $\bD$.
\end{theorem}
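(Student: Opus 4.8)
The plan is to exploit the explicit exponential-family form of the MaxEnt distribution established in Theorem~1 (equivalently, Eq.~(\ref{PD1first})), and show that an allowed $\delta$-swap leaves the exponent unchanged. Concretely, write
\begin{eqnarray*}
P(\bD)&=&\frac{1}{Z(\blambda^r,\blambda^c)}\exp\left(\sum_{i',j'}\bD(i',j')(\lambda^r_{i'}+\lambda^c_{j'})\right),
\end{eqnarray*}
where $\blambda^r,\blambda^c$ are the (fixed) Lagrange multipliers solving the MaxEnt problem and $Z(\blambda^r,\blambda^c)$ is the partition function. The first observation is that $Z(\blambda^r,\blambda^c)$ does not depend on $\bD$ at all, so it plays no role in comparing $P(\bD)$ with $P(\bD')$ where $\bD'$ is obtained from $\bD$ by an allowed $\delta$-swap; and by the definition of \emph{allowed} $\delta$-swap, $\bD'\in\cD^{m\times n}$, so $P(\bD')$ is well defined and given by the same formula with the same multipliers.

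The second and only substantive step is to compute the difference of the exponents. Let the $\delta$-swap act on rows $i,k$ and columns $j,l$, so that $\bD'$ differs from $\bD$ only in the four entries $\bD'(i,j)=\bD(i,j)+\delta$, $\bD'(k,l)=\bD(k,l)+\delta$, $\bD'(i,l)=\bD(i,l)-\delta$, $\bD'(k,j)=\bD(k,j)-\delta$. Then
\begin{eqnarray*}
\sum_{i',j'}\bD'(i',j')(\lambda^r_{i'}+\lambda^c_{j'})-\sum_{i',j'}\bD(i',j')(\lambda^r_{i'}+\lambda^c_{j'})
&=&\delta(\lambda^r_i+\lambda^c_j)+\delta(\lambda^r_k+\lambda^c_l)\\
&&-\,\delta(\lambda^r_i+\lambda^c_l)-\delta(\lambda^r_k+\lambda^c_j),
\end{eqnarray*}
and the right-hand side is identically $0$, since every $\lambda^r_i,\lambda^r_k,\lambda^c_j,\lambda^c_l$ appears once with coefficient $+\delta$ and once with $-\delta$. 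Hence the exponents agree, and since the normalizing constant is the same, $P(\bD')=P(\bD)$, which is the claim.

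I do not expect any genuine obstacle here: the result is essentially a one-line cancellation once the product/exponential form from Theorem~1 is in hand. The only points that warrant an explicit remark are (i) that the comparison is legitimate because the MaxEnt model — and therefore the multipliers $\blambda^r,\blambda^c$ — is determined solely by the constraints (the expected row and column sums), which a $\delta$-swap does not touch, and (ii) that the ``allowed'' qualifier is exactly what guarantees $\bD'$ remains in the support $\cD^{m\times n}$ so that the identity $P(\bD')=P(\bD)$ is meaningful. One may also note in passing that this recovers the classical $2\times2$ binary swap as the special case $\cD=\{0,1\}$, $\delta=1$ (or $\delta=-1$), where the only allowed swaps are precisely those toggling the two patterns in Fig.~\ref{swaps}.
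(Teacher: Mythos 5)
Your proof is correct and is essentially the paper's own argument: the paper verifies the same cancellation via the product of the per-entry factors $P_{ij}$ from Eq.~(\ref{Pindividual}), which is just the exponent-level identity you write out, since each $\lambda^r$ and $\lambda^c$ picks up $+\delta$ and $-\delta$ exactly once. Your additional remarks about the fixed partition function and the role of the ``allowed'' qualifier are sound but not a different route.
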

\begin{proof}
It is easily verified from Eq.~(\ref{Pindividual}) that:
\begin{eqnarray*}
&&P_{ij}(\bD(i,j))\cdot P_{il}(\bD(i,l))\cdot P_{kj}(\bD(k,j))\cdot P_{kl}(\bD(k,l))\\
&=&P_{ij}(\bD(i,j)+\delta)\cdot P_{il}(\bD(i,l)-\delta)\cdot P_{kj}(\bD(k,j)-\delta)\cdot P_{kl}(\bD(k,l)+\delta)
\end{eqnarray*}
for any $\delta$, rows $i,k$ and columns $j,l$.
\qed\end{proof}

This means that for any $2\times 2$ submatrix of $\bD$, adding a
given number to its diagonal and subtracting the same number from
its off-diagonal elements leaves the total probability of the data under the MaxEnt model invariant.

More generally, the MaxEnt distribution assigns the same probability
to any two matrices that have the same row and column sums. This can
be seen from the fact that Eq.~(\ref{PD1first}) is independent from $\bD$
as soon as the row and column sums $\sum_j\bD(i,j)$ and
$\sum_i\bD(i,j)$ are given. In statistical terms: the row and column
sums are sufficient statistics of the data $\bD$ for the MaxEnt distribution.
We can formalize this in the following Theorem:
\begin{theorem}
The MaxEnt distribution for a matrix $\bD$, conditioned on
constraints on row and column sums of the form
\begin{eqnarray*}
\sum_j\bD(i,j)&=&d^r_i,\\
\sum_i\bD(i,j)&=&d^c_j,
\end{eqnarray*}
denoted as $P(\bD|\sum_j\bD(i,j)=d^r_i,\sum_i\bD(i,j)=d^c_j)$, is
identical to the uniform distribution over all databases satisfying
these constraints.
\end{theorem}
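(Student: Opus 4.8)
The plan is to show that the MaxEnt probability $P(\bD)$ depends on $\bD$ only through its row and column sums, and then conclude that conditioning on fixed row and column sums yields a distribution that is constant on its support, i.e.\ uniform. The starting point is Eq.~(\ref{PD1first}): the exponent there is $\sum_i\lambda_i^r\left(\sum_j\bD(i,j)\right)+\sum_j\lambda_j^c\left(\sum_i\bD(i,j)\right)$, which is manifestly a function of the row sums $\sum_j\bD(i,j)$ and the column sums $\sum_i\bD(i,j)$ alone. The normalizing constant $Z(\blambda^r,\blambda^c)$ does not depend on $\bD$ at all. Hence any two matrices $\bD$ and $\bD'$ with the same row sums $d^r_i$ and the same column sums $d^c_j$ satisfy $P(\bD)=P(\bD')$.

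\medskip
\noindent First I would make the support issue explicit. Let $\mathcal{S} = \{\bD\in\cD^{m\times n} : \sum_j\bD(i,j)=d^r_i\ \forall i,\ \sum_i\bD(i,j)=d^c_j\ \forall j\}$ be the set of matrices satisfying the hard constraints, and assume $\mathcal{S}\neq\emptyset$ (otherwise the conditional distribution is undefined). By the observation above, there is a single constant $c$ (namely $c=\frac{1}{Z(\blambda^r,\blambda^c)}\exp\left(\sum_i\lambda_i^r d^r_i+\sum_j\lambda_j^c d^c_j\right)$) such that $P(\bD)=c$ for every $\bD\in\mathcal{S}$. Then the conditional distribution is, by definition of conditioning,
\begin{eqnarray*}
P\left(\bD\ \Big|\ \textstyle\sum_j\bD(i,j)=d^r_i,\ \sum_i\bD(i,j)=d^c_j\right)
= \frac{P(\bD)}{\sum_{\bD'\in\mathcal{S}}P(\bD')}
= \frac{c}{|\mathcal{S}|\,c}
= \frac{1}{|\mathcal{S}|}
\end{eqnarray*}
for every $\bD\in\mathcal{S}$, and zero otherwise. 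This is precisely the uniform distribution over $\mathcal{S}$, which is the claim. In the continuous case ($\cD=\mathbb{R}^+$) one replaces the counting measure by the appropriate measure on the affine slice $\mathcal{S}$ and the argument is identical, the density being constant on that slice.

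\medskip
\noindent There is essentially no hard step here: the result is an immediate corollary of the product/exponential-family form already derived in Theorem~1 and Eq.~(\ref{PD1first}), and it is the same phenomenon underlying the $\delta$-swap invariance theorem just proved — indeed the previous theorem is the special case where $\bD$ and $\bD'$ differ by a single $\delta$-swap. If I wanted to be more careful, the only point deserving a remark is that $\mathcal{S}$ being nonempty is needed for the conditional to make sense, and that the argument does not even require the $\lambda$'s to be the MaxEnt-optimal values — any member of the exponential family (\ref{exponential_family}) with these sufficient statistics conditions down to the same uniform distribution, which is exactly the statement that row and column sums are sufficient statistics.
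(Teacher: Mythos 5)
Your proposal is correct and takes the same route as the paper: the paper's justification is precisely the preceding remark that the exponent in Eq.~(\ref{PD1first}) depends on $\bD$ only through its row and column sums (i.e.\ these are sufficient statistics), from which uniformity of the conditional distribution follows. You simply make the conditioning computation and the nonemptiness of the support explicit, which the paper leaves implicit.
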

This Theorem further clarifies the connection between the uniform
distribution over all matrices with fixed row and column sums, as
sampled from in \citet{swap07,OVK:08} using swap
randomizations, and the MaxEnt distribution.

\subsection{Special cases of $\delta$-swaps}\label{special_delta_swaps}

The invariants that have been used before in computation intensive
approaches for defining null models for databases are
special cases of these more generally applicable $\delta$-swaps.

For binary databases the condition
$\bD(i,j)+\delta,\bD(k,j)-\delta,\bD(i,l)-\delta,\bD(k,l)+\delta\in\cD$
corresponds to the fact that either $\delta=-1$ and
$\bD(i,k;j,l)=\left(\begin{array}{cc}1&0\\0&1\end{array}\right)$, or
$\delta=1$ and
$\bD(i,k;j,l)=\left(\begin{array}{cc}0&1\\1&0\end{array}\right)$.
Then, the $\delta$-swap is identical to a swap in a binary database.
This shows that the MaxEnt distribution of a binary database is
invariant under swaps as defined in \citet{swap07}. For positive
real-valued databases, the $\delta$-swap operations reduce to the
Addition Mask method in \citet{OVK:08}.

\section{Using the MaxEnt model: Randomizing Databases, and Subjective Interestingness of Tiles}\label{use}

In this Section we will describe how the MaxEnt model from Sec.~\ref{mainsection} allows one to take prior information effectively into account in the data mining process, for concreteness focusing on binary databases. First we show how it can be used to randomize databases highly efficiently, such that it is a fast alternative to swap randomizations. Subsequently, we define a new interestingness measure for tiles in binary databases when contrasted with prior information on the row and column sum, based one of the ideas presented in Sec.~\ref{maxent_for_interestingness}.

\subsection{Randomizing binary databases}\label{randomizing}

The ($\delta$-)swap operations discussed in the previous Section, being simple invariants of the MaxEnt distribution, can be used for randomizing any of the types of databases discussed in this paper. This being said, it should be reiterated that the availability of the MaxEnt distribution should make randomizing the data using $\delta$-swaps unnecessary. Should it be needed to generate randomizations of a given database, one can instead sample directly from the MaxEnt distribution, thus avoiding the computational cost and potential convergence problems faced in randomizing the data. The thus randomized databases can be used exactly as proposed in \citet{swap07} for the assessment of data mining results.

A randomized binary database can be sampled directly from the MaxEnt model by looping through all database entries and sampling a Bernoulli random variable with success probability $P_{ij}(\mathbf{D}(i,j)=1)$. The complexity of this approach is $O(mn)$---prohibitive for large sparse databases. 

Fortunately, a faster approach exists, based on the observation that the number of experiments between two successes in a series of Bernoulli experiments is geometrically distributed. We can sample a sparse representation of a large number of Bernoulli random variables by sampling these so-called inter-arrival-times from the geometric distribution. In this way, the time required is proportional to the number of successes in the set of Bernoulli experiments, rather than to the total number of Bernoulli experiments.

This approach can only be used if all Bernoulli random variables have the same success probability, which is not true for the success probabilities $P_{ij}(\bD(i,j)=1)$ of the entries under the MaxEnt model. However, two database entries will have a different success probability only if either their column sums or their row sums (and thus the associated Lagrange multipliers) are different. From Lemma~\ref{lemma_sqrts}, it is immediate that the number different combinations of row and column sums is $\widetilde{m}\widetilde{n}\leq(\sqrt{2s})^2=2s$, i.e. at most proportional to the number of non-zeros in the original matrix.

Putting things together, this means that a random database can be sampled from the MaxEnt model using a double for-loop over all distinct $\widetilde\lambda_i^r$ and all distinct $\widetilde\lambda_j^c$, with in total at most $2s$ combinations. For each of these combinations, all entries in intersections of the rows and columns with these Lagrange multipliers can be sampled efficiently using the geometric distribution as outlined above.

The total complexity thus consists of two components: sampling the geometric distribution, and the overhead of looping over all combinations of row and column Lagrange multipliers. The latter clearly has a complexity bounded by $s$. The former has a complexity proportional to the number of non-zeros in the sampled matrix, which is proportional to $s$ in expectation and tightly concentrated around it. Hence, the expected complexity is $O(s)$ with the actual complexity tightly concentrated around this.

We should point out that sampling from the MaxEnt model cannot be used as a substitute for swapping if the row and column sums need to be preserved exactly rather than in expectation. This may be the case for categorical data represented by a binary matrix where each column corresponds to an attribute-value. However, in that case a MaxEnt model can be fitted on the categorical representation of the data. Then the constraints will not be on the row and column marginals, but on the number of times each of the attribute values is seen for each of the attributes. Without going into detail, the MaxEnt distribution would then be a product of categorical distributions (one for each database entry), rather than a product Bernoulli distributions.

\subsection{The MaxEnt model to define interestingness of tiles}\label{use_case}

The above shows that the MaxEnt model can be used as an alternative to swap randomizations for the generation of randomized versions of databases. A comparison with swap randomizations for this purpose of randomizing databases can therefore be made, and the empirical results reported in Sec.~\ref{experiments} show that the MaxEnt model allows one to generate randomizations more efficiently than using the swap randomizations strategy.

However, what is more important is that the explicit analytical nature of the MaxEnt model allows one to use it in situations where swap randomizations would be impractical, such as for defining new and subjective measures of interestingness of patterns.

To demonstrate the use of the MaxEnt model for this purpose, we here work out a specific example. In particular, we will focus on binary databases $\bD\in\{0,1\}^{m\times n}$ and a kind of pattern known as a tile \citep{tiles04} that is denoted as $\tau$ and defined as an ordered pair of a set of rows $I\subseteq\{1,\ldots,m\}$ and a set of columns $J\subseteq\{1,\ldots,n\}$, i.e. $\tau=(I,J)$. We say that a tile $\tau=(I,J)$ is present in the database $\bD$, denoted as $\tau\in\bD$, iff $\bD(i,j)=1$ for all $i\in I$ and $j\in J$. Furthermore, we say that the database entry at row $i$ and column $j$ is contained in a tile $\tau=(I,J)$ iff $i\in I$ and $j\in J$, and we denote this more concisely as $(i,j)\in\tau$.

Below we will define a measure of interestingness for tiles and extend these ideas also to sets of tiles. Our approach is based on the second option in Sec.~\ref{maxent_for_interestingness}: computing the compression ratio of information embodied by the statement that a tile is present in the database. In order to quantify this, we need to quantify two things: the \emph{self-information} of a tile pattern with respect to the MaxEnt model representing the prior information, and its \emph{description length} representing its complexity as perceived by the data miner.

\subsubsection{The self-information of a tile}

Let us try to intuitively quantify the amount of information conveyed to a data miner if he is told about the presence of a tile in the database. We argue that it could be formalized by the prior belief the data miner had about the presence or absence of the tile. The most natural way of formalizing this is to use a background distribution representing the data miner's prior expectations, and to compute the probability $Pr(\tau\in\bD)$ of the tile-pattern under this distribution. The smaller $Pr(\tau\in\bD)$, the more information this tile-pattern contains.

A more convenient way of quantifying this is as the negative log-probability, known as the self-information in Shannon's information theory \citep{CoT:92}:
\begin{eqnarray}
\mbox{SelfInformation}(\tau)&=&-\log(Pr(\tau\in\bD)),
\end{eqnarray}
where the probability is taken with respect to the background distribution. If a pattern is more interesting as its probability is smaller, it is equivalently more interesting as its self-information is larger, since minus the logarithm is a monotonically decreasing function.

The self-information is the number of bits (if a base 2 logarithm is used) that is required to encode a particular outcome of a random variable in a Shannon-optimal code. Here that random variable is the indicator variable indicating presence or absence of the tile in the database. Besides its useful interpretation as a code length, the self-information has an important practical advantage over the probability of the presence of the tile as a measure of information content: the logarithm maps extremely small probabilities to numerically more manageable values.

For the MaxEnt model subject to row and column sums, the self-information of a tile pattern can be computed very conveniently by relying on Eqs.~(\ref{eq_independent}-\ref{Pindividual_Bernoulli}):
\begin{eqnarray}\label{self-information}
\mbox{SelfInformation}(\tau)&=&-\sum_{(i,j)\in\tau}\log(p_{ij}),
\end{eqnarray}
where
\begin{eqnarray}\label{pij}
p_{ij}&=&\frac{\exp\left(\lambda^r_i+\lambda^c_j\right)}{1+\exp\left(\lambda^r_i+\lambda^c_j\right)}.
\end{eqnarray}
I.e., the self-information is equal to the sum of the negative log-probabilities that the database entry $\bD(i,j)=1$, summed over all row-column pairs $(i,j)$ in the tile. The fact that it reduces to a simple sum is due to the independence of the database entries $\bD(i,j)$ under the MaxEnt distribution.

\subsubsection{The description length of a tile}

A data miner is never merely interested in receiving as much information as possible. Indeed, the best way to achieve this would be to communicate the entire database to the data miner, which would be of little use. Instead, a data miner will be interested in hearing about patterns that convey new information \emph{as concisely as possible}.

To quantify this, we also need to consider the inherent description length of the pattern, in this case the tile $\tau=(I,J)$, or equivalently its set of rows $I$ and set of columns $J$. A sensible way to describe a set of rows $I$ would be to assume a probabilistic model in which rows occur independently in $I$, each with a certain probability $p$ (see below for more details). Then the description length for the set $I$ under a Shannon-optimal code with respect to this model is given by:
\begin{eqnarray*}
\mbox{DescriptionLength}(I)&=&-\sum_{i\not\in I}\log(1-p)-\sum_{i\in I}\log(p)\\
&=&-(m-|I|)\log(1-p)-|I|\log(p),\\
&=&|I|\log\left(\frac{1-p}{p}\right)+ m\log\left(\frac{1}{1-p}\right),\\
\end{eqnarray*}
Doing the same for the set of columns $J$ and combining both descriptions, the description length for a tile $\tau=(I,J)$ is given by:
\begin{eqnarray}
\mbox{DescriptionLength}(\tau)&=&\mbox{DescriptionLength}(I)+\mbox{DescriptionLength}(J), \nonumber\\
&=&(|I|+|J|)\log\left(\frac{1-p}{p}\right)+(m+n)\log\left(\frac{1}{1-p}\right).
\label{description-length}
\end{eqnarray}

This means that the description length of a tile is equal to its circumference $|I|+|J|$ times a constant, plus another constant term, which makes intuitive sense as a model for the perceived complexity of a tile.

The probability parameter $p$ can be set by the data miner to bias the search toward larger or toward smaller tiles. Indeed, if $p$ is small, the constant component $(m+n)\log\left(\frac{1}{1-p}\right)$ of a tile description length is small while the variable component $(|I|+|J|)\log\left(\frac{1-p}{p}\right)$ is large, thus yielding a short description length for tiles with a small circumference as compared to large ones. In our experiments, we set it equal to the probability that $\bD(i,j)=1$ for $i$ and $j$ sampled uniformly at random, which is equal to the density of the database $p=\frac{1}{mn}\sum_{i,j}\bD$.

\subsubsection{The compression ratio of a tile as interestingness measure}

The interestingness of a tile can now be quantified as the ratio with which information is compressed in the tile pattern:
\begin{eqnarray}\label{compression-ratio}
\mbox{CompressionRatio}(\tau)&=&\frac{\mbox{SelfInformation}(\tau)}{\mbox{DescriptionLength}(\tau)}.
\end{eqnarray}
This ratio expresses the number of bits of information received by the data miner (with respect to the MaxEnt model), per bit received to describe the tile $\tau$. Tiles that have the largest compression ratio are thus most efficient at communicating aspects of the data the data miner did not expect a priori.

\subsubsection{Finding interesting sets of tiles}

It is well-known that the set of individually most interesting patterns is often not the most interesting set of patterns, regardless of which interestingness measure is used (see e.g. \cite{DeZ:07}). This is due to redundancies between the patterns that are individually interesting. So the question arises if we can also use the above tools to define the interestingness of a set of tiles $\mathcal{T}=\{\tau_1,\ldots,\tau_N\}$. It turns out the approach generalizes easily. Furthermore, it yields an additional formal argument in favour of using the ratio of the self-information and description length as interestingness measure for an individual tile.

Describing a set of tiles requires one to describe each of the tiles in the set. Hence, the description length of a set of tiles is quantified by the sum of the description lengths of each of the tiles individually. Slightly overloading notation, for a set of tiles $\mathcal{T}=\{\tau_1,\ldots,\tau_N\}$:
\begin{eqnarray*}
\mbox{DescriptionLength}(\mathcal{T})&=&\sum_{i=1:N}\mbox{DescriptionLength}(\tau_i).
\end{eqnarray*}

The self-information of a set of tiles is generalized as the negative log-probability that all tiles in the set are present in the database. Due to the independence of the database entries under the MaxEnt distribution, this is equal to the sum of the negative log-probabilities that $\bD(i,j)=1$ for all the database entries $(i,j)$ belonging to some tile $\tau\in\mathcal{T}$. Formally:
\begin{eqnarray*}
\mbox{SelfInformation}(\mathcal{T})&=&-\sum_{(i,j):\exists\tau\in\mathcal{T}
\mbox{\scriptsize{ with }} (i,j)\in \tau} \log(p_{ij}),
\end{eqnarray*}
with $p_{ij}$ as defined in Eq.~(\ref{pij}).

In practice, we argue that a data miner has a bounded capacity of taking in and processing patterns. Given this capacity, the data miner would like to receive as much information as possible. In this setting, tile set mining can be formalized by the following optimization problem:
\begin{eqnarray*}
\max_{\mathcal{T}}&&\mbox{SelfInformation}(\mathcal{T}),\\
\mbox{s.t.}&&\mbox{DescriptionLength}(\mathcal{T})\leq u
\end{eqnarray*}
for some upper bound $u$, representing the data miner's capacity.

Interestingly, this problem can be reduced to the \emph{(weighted) budgeted maximum coverage problem} \citep{khuller99}, which is a weighted variant of the maximum set coverage problem. In that problem, a universe of elements is given and with each element a weight is associated. Furthermore, a collection of subsets of the universe is given, each of which has a specified cost. The task is to select a set of subsets from the collection so as to maximize the sum of the weights of the elements in the union of these selected subsets, while respecting an upper bound on the sum of the costs of the selected subsets. 

To reduce our tile mining problem to the budgeted maximum coverage problem, the elements in the universe are the database entries that are equal to 1. The collection of subsets is given by the collection of all tiles present in the database. The weight of the database entry at position $(i,j)$ is equal to the contribution it makes to the information content of a tile containing it, equal to $-\log(p_{ij})$. And the cost of a tile is equal to its description length.

The budgeted maximum coverage problem is a hard combinatorial problem. Fortunately, it can be approximated well by using an efficient greedy algorithm (see \cite{khuller99} for details). The criterion to greedily select a tile $\tau_k$ for inclusion as $k$'th tile in the set is the ratio of the sum of the weights $-\log(p_{ij})$ of database entries $(i,j)\in\tau_k$ not yet contained in earlier selected tiles, versus the description length of $\tau_k$. Formally, with
\begin{eqnarray*}
\mbox{SelfInformation}^+(\tau_k)&=&-\sum_{(i,j)\in\tau_k\mbox{\scriptsize{ and }}(i,j)\not\in\tau_l:l<k}
\log(p_{ij}),
\end{eqnarray*}
in iteration $k$ of the greedy algorithm selects the tile $\tau_k$ maximizing the $\mbox{CompressionRatio}^+(\tau_k)$ defined as follows:
\begin{eqnarray*}
\mbox{CompressionRatio}^+(\tau_k)&=&\frac{\mbox{SelfInformation}^+(\tau_k)}{\mbox{DescriptionLength}(\tau_k)}.
\end{eqnarray*}
In the first iteration, this selection criterion coincides with the interestingness measure $\mbox{CompressionRatio}(\tau)$ as defined earlier, thus corroborating our choice for the this ratio as interestingness measure.

Note that upon selection of a tile $\tau_k$ in iteration $k$, the $\mbox{CompressionRatio}^+(\tau)$ of any other yet unselected tile can only decrease. This can be exploited by the algorithm by keeping all yet unselected tiles in a sorted list, sorted according to their last updated value of $\mbox{SelfInformation}^+(\tau)$. Then, to select the next best tile, the tile at the top of this list is considered and the updated value of its $\mbox{CompressionRatio}^+(\tau)$ is computed. If this value is still larger than the subsequent tile in the list, we can be sure it will remain so even after all subsequent ones are updated too, and it can be selected as the $k+1$'st tile. Otherwise, it must be inserted in the list to keep it sorted, and the second tile in the list is considered. This powerful idea was first introduced in \citet{Min:78}.

The approximation quality of this greedy algorithm is such that any set of $k$ top-ranked tiles in this list has a self-information that is at least $1-\frac{1}{e}$ times the maximum self-information that can be achieved by a set of tiles with a description length that is not longer. Note that this means that the upper bound $u$ on the total description length of the set of tiles does not need to be specified in advance. A data miner can keep querying for the next tile in the list until satisfied, and be sure that all tiles seen so far constitute a tile set that conveys near to the maximum amount of information given its total description length.


\section{Discussion}\label{discussion}

In this Section we point out how the MaxEnt modeling strategy from Sec.~\ref{mainsection} can be used almost directly for modeling network adjacency matrices, and we discuss some relations with the data mining and random networks literature.

\subsection{Networks adjacency matrices as a special case of rectangular databases}

Networks can be represented using their adjacency matrix. A swap operation applied to this adjacency matrix corresponds to swapping a pair of edges $i\rightarrow j$ and $k\rightarrow l$, yielding a new pair of edges $i\rightarrow l$ and $k\rightarrow i$. Such edge swap operations preserve the in- and out-degree of all nodes. They have been introduced and used for the statistical assessment of network patterns, similar to the use of swap randomizations in databases \citep{MSI:02}.

All theory developed this paper for rectangular databases can be applied with minor changes to various types of networks. They can be unweighted or weighted, directed or undirected (using a symmetricity constraint), with and without self-loops (by constraining the diagonal to contain zeros only).

We will not go into further details here. However, because of the importance of networks, in Sec.~\ref{experiments} we will report some empirical results on networks as well.

\subsection{Related literature}

In the Sec.~\ref{prior_si} we discussed prior work on subjective interestingness measures. Here we wish to highlight some further connections with the literature.

\subsubsection{Literature related to the MaxEnt model}

It is instructive to point out how some existing models are related to particular cases of the MaxEnt models introduced in this paper.

Most importantly, the MaxEnt model for binary matrices introduced in
this paper is formally identical to the Rasch model, known from
psychometrics \citep{Rasch61}. This model was introduced to model the
performance of individuals (rows) to questions (columns) in a questionnaire. The matrix
elements indicate which questions were answered correctly or
incorrectly for each individual. The Lagrange multipliers are
interpreted as persons' abilities for the row variables
$\lambda_i^r$, and questions' difficulties $\lambda_j^c$.
Somewhat remarkably, the model was not derived from the MaxEnt principle but
stated directly.

A similar connection exists with the so-called $p^*$ models from
social network analysis \citep{RPK:07}. Although motivated
differently, the $p_1$ model in particular is formally identical to
our MaxEnt model when applied to adjacency matrices for unweighted networks.

Thus, the present paper provides an additional way to look at these
widely used models from psychometrics and social network analysis.
Furthermore, as we have shown, the MaxEnt approach suggests
generalizations, in particular towards non-binary databases and
weighted networks.

When applied to adjacency matrices of networks, the MaxEnt model is related to random network models for
networks with prescribed degree sequences (see \citet{New:03} and
references therein). The most similar model to the ones discussed in
this paper is the one from \citet{ChL:04}. In this paper, the authors
propose to assume that edge occurrences are independent, with each
edge probability proportional to the product of the degrees of the
pair of nodes considered. In the notation of the present paper:
\begin{eqnarray*}
P(\bD)=\prod_{i,j}P(\bD(i,j))&\mbox{with}&
P(\bD(i,j))=\frac{d_id_j}{s},
\end{eqnarray*}
where $s=\sum_i d_i$. Also for this model the constraints on the
expected row and column sums are satisfied.

It would be too easy to simply dismiss this model by stating that
among all distributions satisfying the expected row and column sum
constraints, it is not the maximal entropy one, such that it is
biased in some sense. However, this drawback can be made more
tangible: the model represents a probability distribution only if
$\max_{i,j}d_id_j\leq s$, which is by no means true in all practical
applications, in particular in power-law graphs. This shortcoming is
a symptom of a bias of this model: it disproportionally favors
connections between pairs of nodes both of high degree, such that
for nodes of too high degrees the edge `probability' suggested
becomes larger than $1$. A brief remark considering a similar model
for binary databases was made in \citet{swap07}, where it was
dismissed by the authors on similar grounds.

The uses of the maximum entropy principle in statistics are to numerous to list (\citet{Jaynes57,Jaynes82} are good starting points). Of particular interest to this paper is the prior use of the maximum entropy objective as a regularizer in image reconstruction, and more specifically in computer tomography \citep[e.g.][]{GuS:84}. Here, the intensity distribution over the image is regarded as a probability distribution, to be inferred from integrals of this distribution along various paths. This is similar to the MaxEnt model for binary databases presented in the current paper, where the paths correspond to the rows and columns.

The maximum entropy principle has also been used before in data mining, albeit it for a different purpose and in a different manner than in this paper (e.g. not for the incorporation of prior information into a background model). For example in \citet{Tat:08} the frequency of an itemset was contrasted with an estimate based on the frequency of all its subsets, estimated using the maximum entropy principle. In that paper an itemset was considered more interesting when the actual and estimated frequency differed more strongly, thus defining an objective interestingness measure. In \citet{Sav:04} a similar maximum entropy model had already been used to come up with upper and lower bounds on the possible support values for itemsets. In \citet{PMS:03}, the maximum entropy principle was used for query count approximation on binary databases. Here, the number of results of a query was estimated by relying on probabilistic models of the rows (transactions) in the database. One of the probabilistic models considered in this paper was the maximum entropy model subject to the knowledge of the frequency of the frequent itemsets. For computational reasons, the maximum entropy model here was computed at query time, for just the small subset of variables involved in the query.

\subsubsection{Literature related to the compression ratio as interestingness measure}

The self-information described in Sec.~\ref{use_case} is most strongly related to the surface of a tile \citep{tiles04}. Indeed, if the expected row sums are all equal and similarly for the column sums, $Pr(\bD(i,j)=1)$ under the MaxEnt model is constant throughout the database and the self-information of a tile is simply proportional to its size $|I|\times|J|$. In \cite{tiles04}, each tile was attributed an equal cost as well, and the problem of finding an interesting set of tiles was formalized as finding the set of tiles of a given maximal size that maximizes the number of database entries covered. Then it was observed that solving this optimization problem can be approximated using an (unweighted) budgeted maximum coverage problem.

Hence, our method can be regarded as a refinement of tiling databases in two ways: by giving each database entry a different weight (related to the MaxEnt distribution), and by giving each tile a different cost (depending on its description length). These two modifications make a dramatic difference in the subjective quality of the result, as demonstrated in Sec.~\ref{exp_tiles}.

Another method that is somewhat related is KRIMP \citep{Item06Siebes}, which searches for a set of itemsets that allow one to compress a database. This approach is motivated by the minimum description length principle, regarding data mining essentially as a compression process: a pattern set is considered (objectively) interesting if it has a short description length and simultaneously allows one to describe the data concisely. This leads to objective interestingness measures for patterns. In our approach, we are also searching for a pattern set with a small description length. However, we are not concerned with describing the entire data as concisely as possible, but instead we want to concisely describe \emph{unanticipated aspects of the data} using the pattern set. This makes the resulting interestingness measure subjective in nature.

Lastly, we should note that in a recent conference paper that however did not discuss the MaxEnt model in detail, we introduced a similar measure of interestingness for \emph{noisy} tiles \citep{KoD:10}, which have also been discussed by other authors in different contexts, e.g. in \citet{GMS:04} and \citet{MMG:08}. However, in the current paper we chose to present an interestingness measure for noise-free tiles, as it is sufficient to illustrate the general framework from Sec.~\ref{framework} without risking to overload the reader by the technicalities related to noisy tiles.

\section{Experiments}\label{experiments}

In this Section we assess the computational cost of fitting the MaxEnt model from Sec.~\ref{mainsection} for given expected row and column sums. We also assess the cost and empirical properties of sampling random databases from the MaxEnt model, and we empirically compare this to swap randomizations. Additionally, we will show that the MaxEnt model can be fitted efficiently to very large networks as well. Finally, we will assess the compression ratio as a subjective interestingness measure for tiles by using it for three document databases.

All experiments were done on a 2GHz Pentium Centrino with 1GB Memory, and the code used for each of the experiments will be made freely available on \texttt{www.tijldebie.net/software/maxent}.

\subsection{The MaxEnt model for binary databases}\label{sec_databases}

We report empirical results on ten databases:  seven databases commonly used for evaluation purposes (Retail \citep{retail}, Mushroom, Pumsb, Pumsb star, Connect \citep{Asuncion+Newman:2007}, T10I4D100K, and T40I10D100K \citep{IBMgenerator}), as well as the following three textual datasets turned into databases by considering words as items and documents as transactions:
\begin{description}
\item[ICDM.] All ICDM paper abstracts until 2007. Each abstract is represented by a transaction and words are items, after stop word removal and
stemming.
\item[KDD.] All KDD paper abstracts between 2001 and 2008 (from all sessions)
downloaded from the ACM website. Each abstract is represented by a
transaction and words are items, after stop word removal and
stemming.
\item[Pubmed.] All Pubmed abstracts retrieved by querying with the search
query ``data mining", after stop word removal and stemming.
\end{description}
Some statistics are gathered in Table~\ref{datasets}. The Table
also mentions support thresholds used in some of the experiments reported below, as well as the numbers of closed itemsets satisfying these support thresholds.

For each of these databases we computed the MaxEnt model with expected row and column sums equal to the observed row and column sums in the data.

\begin{table}\caption{Some statistics for the databases investigated: the number of items (columns) and transactions (rows) in the database, the support threshold used in the experiments involving closed itemset mining, the resulting number of closed itemsets, and the average length of each transaction (row) in the databases.}\label{datasets}
\begin{center}
\begin{tabular}{|r|ccccc|}\hline
& \# items & \# trans- & support & \# closed & average \\
& & actions & used & itemsets & transaction \\
& &  &  &  & length\\\hline%
ICDM & 4,976 & 859 & 5 (0.6$\%$) & 365,249 & 48.9 \\%
KDD & 6,154 & 843 & 5 (0.6$\%$) & 2,787,847 & 65.2 \\%
Pubmed & 12,661 & 1,683 & 10 (0.6$\%$) & 1,245,454 & 74.1 \\%
Mushroom & 120 & 8,124 & 81 (1$\%$) & 78,362 & 23.0 \\%
Retail & 16,470 & 88,162 & 9 (0.01$\%$) & 191,088 & 10.3 \\%
Pumsb & 7,117 & 49,046 & 34,332 (70$\%$) & 242,001 & 74.0 \\%
Pumsb star & 7,117 & 49,046 & 14,714 (30$\%$) & 16,486 & 50.5 \\%
Connect & 130 & 67,557 & 40,534 (60$\%$) & 68,349 & 43.0 \\%
T10I4D100K & 1,000 & 100,000 & 100 (0.1$\%$) & 26,962 & 10.0 \\%
T40I10D100K & 1,000 & 100,000 & 1,000 (1$\%$) & 65,236 & 40.0\\\hline%
\end{tabular}
\end{center}
\end{table}

\paragraph{Fitting the MaxEnt model}
The method we used to fit the model is a preconditioned gradient
descent method with Jacobi preconditioner \citep[e.g.][]{CG},
implemented in C++. It is conceivable that more sophisticated
methods will lead to significant further speedups (e.g. methods discussed in \citet{boyd04}, many of which have guaranteed super-linear convergence), but this
one is empirically fast enough and has the advantage of being particularly easy to implement.

To illustrate the speed to compute the MaxEnt distribution,
Fig.~\ref{convergence_analysis} shows plots of the convergence of
the squared norm of the gradient to zero, for the first 25
iterations. The initial value for all Lagrange multipliers was chosen to
be equal to $0$. Noting the logarithmic vertical axis, the
convergence appears exponential. The lower plot in
Fig.~\ref{convergence_analysis} shows the convergence of the Lagrange dual
objective to its minimum over the iterations, a very fast
convergence in just a few iterations.

In all experiments we stopped the iterations as soon as the
normalized squared norm of the gradient became smaller than $10^{-12}$, which is
close to machine accuracy and we believe it is accurate enough for all practical
purposes. The number of iterations required and the overall
computation time are summarized in Table~\ref{computation}.

\begin{table}\caption{The number of iterations required, and the
computation time in seconds to fit the MaxEnt model.}\label{computation}
\begin{center}
\begin{tabular}{|r|cc|}\hline
& \# iterations & time (s)\\\hline%
ICDM & 13 & 0.35 \\%
KDD & 13 & 0.50 \\%
Pubmed & 15 & 1.21 \\%
Mushroom & 35 & 0.012 \\%
Retail & 18 & 2.0 \\%
Pumsb & 36 & 0.10 \\%
Pumsb star & 37 & 0.94 \\%
Connect & 33 & 0.048 \\%
T10I4D100K & 34 & 2.0 \\%
T40I10D100K & 33 & 5.3 \\\hline%
\end{tabular}
\end{center}
\end{table}

\begin{figure}
\begin{center}
\includegraphics[width=1\columnwidth]{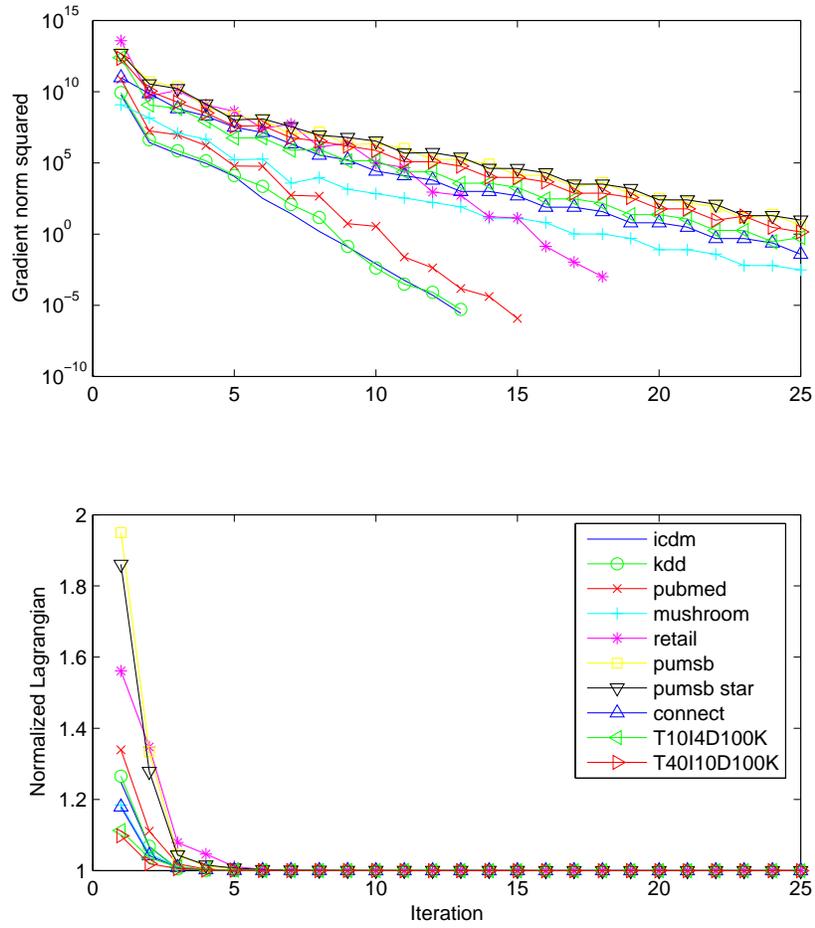}
\caption{Top: the squared norm of the gradient on a logarithmic
scale as a function of the iteration number, plotted for four
databases: KDD abstracts, Mushroom, Pubmed abstracts, and Retail.
This plot shows the exponential decrease of the gradient of the Lagrange dual
optimization problem. In the second plot, the convergence of the
Lagrange dual is shown for the same databases.}\label{convergence_analysis}
\end{center}
\end{figure}

\begin{figure}
\begin{center}
\includegraphics[width=1\columnwidth]{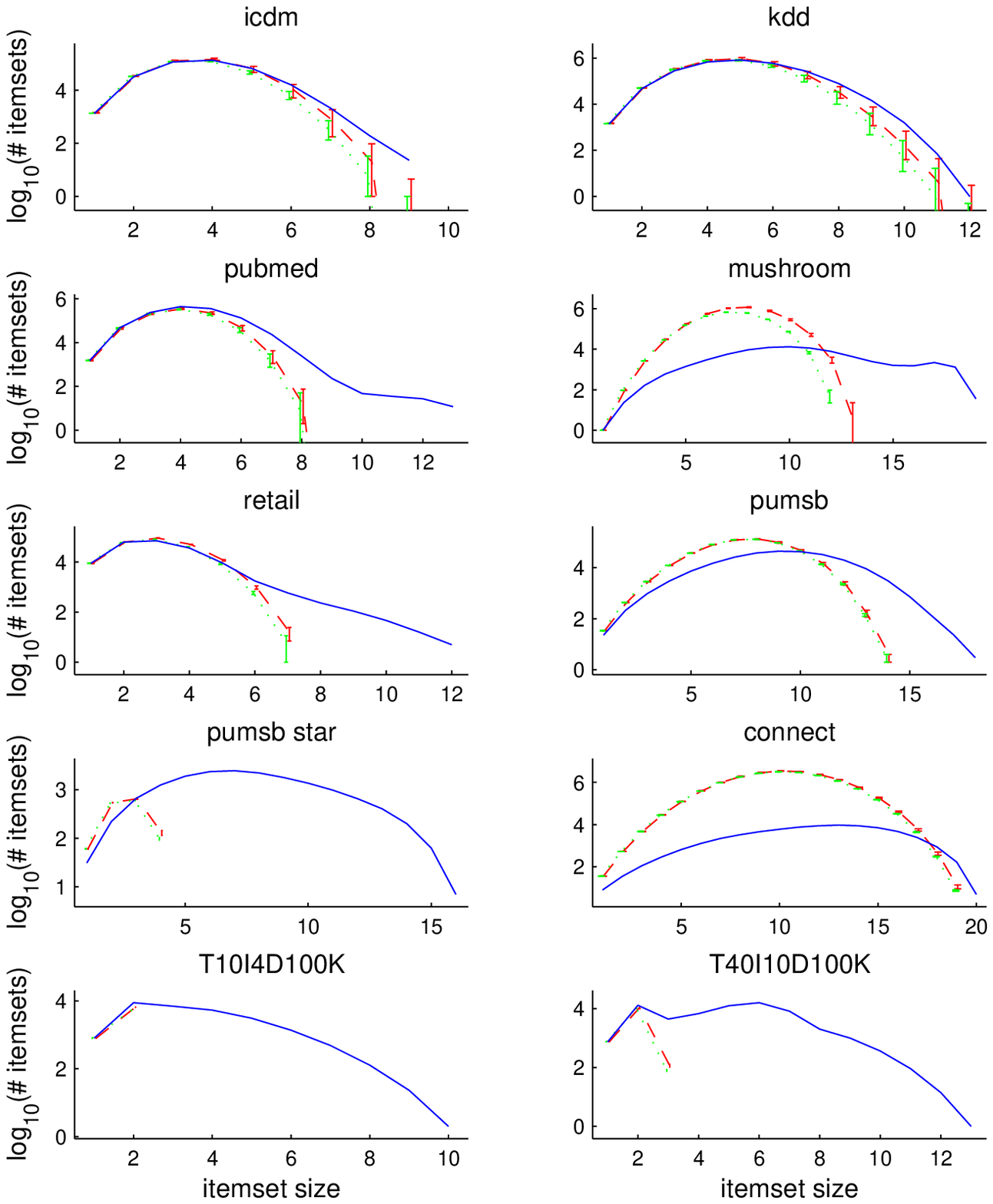}
\caption{For the ten datasets under investigation, these plots show
the number of closed itemsets on a logarithmic scale, as a function
of their size (solid blue line). We also computed the average number of
closed itemsets as a function of their size found on 100 randomized
datasets, along with error bars for the 5th and 95th percentile are shown. The results are plotted both for the swap randomization approach (green dotted line) and the MaxEnt sampling approach (red dashed line).}\label{randomization_testing}
\end{center}
\end{figure}

\paragraph{Assessing data mining results}
Here we illustrate the use of the MaxEnt model for assessing data
mining results in the same spirit as \citet{swap07}.
Figure~\ref{randomization_testing} plots the number of closed
itemsets retrieved on the original data as a function of the itemset size. Additionally, it shows averages with 5th-95th percentile error bars for the results obtained on randomly sampled databases from the MaxEnt model with expected row sums and column sums constrained to be equal to their values on the original data. If desired, one could extract one global measure from these results, as in \citet{swap07}, and compute an empirical p-value by comparing that measure obtained on the actual data with the result on the randomized versions. However, the plots given here do not force one to make such a choice, and they still give a good idea about which itemset sizes are significant in the datasets.

Figure~\ref{randomization_testing} also shows averages and error bars for the results obtained on databases randomized using swaps, done using the code from \citet{swap07} and using five times the number of nonzero database entries (as recommended in \citet{swap07}). It can be noted that the error bars strongly overlap in most cases. The difference between the two randomization strategies is largest for the mushroom dataset. Interestingly, this is a dataset where the transaction sizes are fixed, such that the MaxEnt modeling approach may indeed yield qualitatively different results as compared to the swap randomization approach. Indeed, sampling from the MaxEnt model only conserves the transaction sizes in expectation. A way around this problem is sketched at the end of Sec.~\ref{randomizing}.

\paragraph{Computational cost compared to swap randomizations}
The above shows that randomizing databases using swap randomizations leads to results similar to those obtained by sampling from a fitted MaxEnt model. However, the MaxEnt strategy is five to fifteen times more efficient in generating one randomized database (including the overhead for fitting the MaxEnt model), and about thirty times more efficient when several randomized databases need to be sampled. These computational results are summarized in Fig.~\ref{computation_times}.

\begin{figure}
\begin{center}
\includegraphics[width=1\columnwidth]{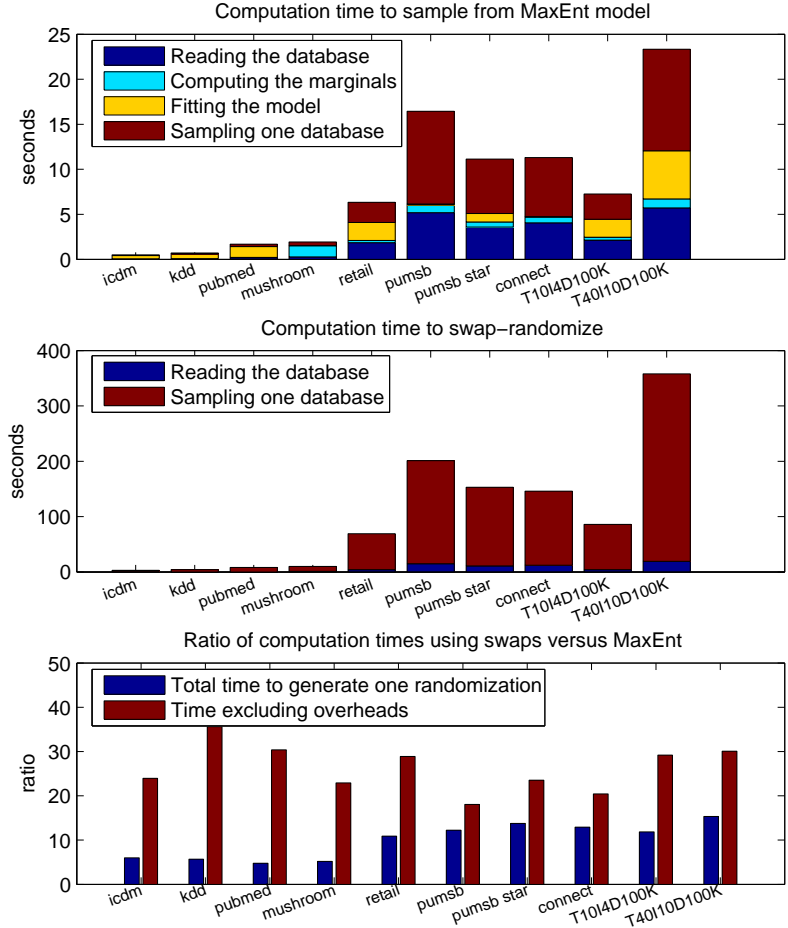}
\caption{Top: The computation times for randomizing the ten databases considered by fitting the MaxEnt model and sampling from this model. The computation time is split up in the time required to fit read the data, compute the marginals, fit the model, and sample one database from the model. Note that in order to sample 100 databases, only the last component would need to be done 100 times. Middle: The computation times for randomizing using swaps, split up into a component for reading the data and a component for generating one database. Bottom: The ratio of the computation times required by both methods, considering the total time to generate one randomization, and considering only the part that needs to be repeated if multiple databases are to be sampled.}\label{computation_times}
\end{center}
\end{figure}

\subsection{The MaxEnt model for various types of networks}

\paragraph{Artificially generated power-law networks}
To assess the feasibility of using the MaxEnt modeling strategy for networks, we artificially generated
power-law (weighted) degree distributions for networks of various
sizes between $n=10$ and $n=10^6$ nodes, with a power-law exponent
of $2.5$. I.e., for each number of nodes $n$ we sampled $n$ expected (weighted)
degrees $d_i$ from the power-law distribution $P(d_i)\sim d_i^{-2.5}$. A
power-law degree distribution with this exponent is often observed
in realistic networks \citep{New:03}, so we believe this is a
representative set of examples. For each of these degree distributions, we fitted four
different types of undirected networks: unweighted networks with and
without self-loops, and positive integer-valued weighted networks
with and without self-loops.

To fit the MaxEnt models for networks we made use of Newton's
method, which we implemented in MATLAB. As can be seen from
Fig.~\ref{time_newton}, the computation time was under 30 seconds
even for the largest network with $10^6$ nodes. The number of Newton
iterations is less than $50$ for all models and degree
distributions considered. \fixme{Note that the increase in
computation time seems linear, as it increases by 3 orders of
magnitude for the last 3 orders of magnitude increase in the network
size. Can something like this even be proven for power-law
distributions??}

\paragraph{Real-life networks}
We also fitted the MaxEnt model to two large real-life networks:
\begin{itemize}
\item A symmetrized snapshot of the internet created by Mark Newman in July 2006.\footnote{Available from \texttt{http://www-personal.umich.edu/$\sim$mejn/netdata/}.} This is an undirected unweighted network. The number of nodes in this network is $22,963$, the total number of different degrees is $161$, with degrees ranging between $1$ and $2390$. Fitting the MaxEnt model required $0.73$ seconds.
\item A network of movie actors \citep{BaA:99}, where edges between actors are weighted by the number of movies in which they jointly appeared. I.e., this is an undirected integer-valued weighted network. The number of nodes (actors) in this network is $127,823$, the total number of unique degrees is $2,526$, with values ranging between $0$ and $8,382$. Fitting the MaxEnt model required $689$ seconds, much larger than for the internet network mostly due to the larger number of unique degrees.
\end{itemize}

\begin{figure}
\begin{center}
\includegraphics[width=0.6\columnwidth]{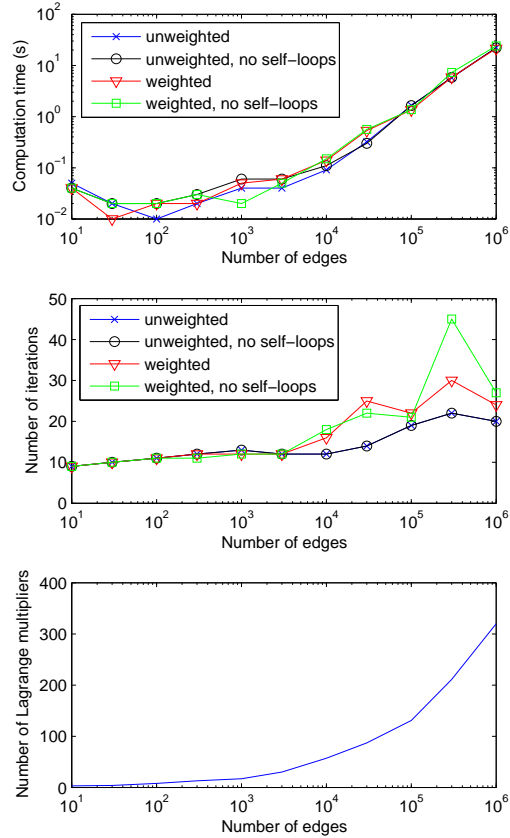}
\end{center}
\caption{Top: The computation time as a function of the size number
of nodes in the network (left). A marker $\times$ is used for the
unweighted model with self-loops, $\circ$ for the unweighted model
without self-loops, $\triangledown$ for the weighted model with
self-loops, and $\Box$ for the weighted model without self-loops.
Note the log-log scale. Middle: The number of iterations required by
the Newton algorithm before convergence. Note the log-scale on the
horizontal axis. Bottom: the number of Lagrange multipliers (i.e.
the number of Lagrange multipliers in the Lagrange dual of the MaxEnt optimization
problem) for the degree sequences investigated, as a function of the
network size. Again, note the log-scale on the horizontal
axis.}\label{time_newton}
\end{figure}

\paragraph{General remarks on the network experiments}
This fast performance can be achieved thanks to the fact that the
number of different degrees observed in the degree distribution is
typically much smaller than the size of the network (see discussion
in Sec.~\ref{sec_lagrange}). The bottom graph in
Fig.~\ref{time_newton}, showing the number of Lagrange multipliers
as a function of the network size supports this. The memory
requirements remain well under control for the same reasons.

It should be pointed out that in the worst case for dense or for
weighted networks (and in particular for real-valued weights), the
number of distinct expected weighted degrees and hence the number of
Lagrange multipliers can be as large as the number of nodes $n$.
This would make it much harder to use off-the-shelf optimization
tools for $n$ much larger than a few thousands. However, the problem can be
made tractable again if it is acceptable to approximate the expected
weighted degrees by grouping subsets of them together into bins, and
replacing their expected degree values by a bin average. In this way the number of
Lagrange multipliers can be reduced to an acceptable level.

\subsection{Using the MaxEnt model to find interesting sets of tiles}\label{exp_tiles}

Here we aim to demonstrate the use of the MaxEnt model in formalizing subjective interestingness where the data miner has prior information on the row and column sums. We applied the method for finding interesting sets of tiles from Sec.~\ref{use_case} to the three abstract datasets discussed above (KDD, ICDM, and Pubmed, see Sec.~\ref{sec_databases}). The particular type of prior information, on the row and the column marginals, makes sense in this setting. Indeed, if words are jointly contained in many documents purely because they are frequent individually, this association is not very meaningful. Similarly, if documents share many words purely because they are long and contain many words, this is not interesting either. Our results below demonstrate that by assuming the frequency of words and lengths of documents as prior information and searching for patterns that contrast with that using our methodology, discovery of such trivial associations is avoided.\footnote{Note that we do not want to sell our method for natural language processing. It simplifies documents to bags of words, disregarding linguistics, and can therefore not be expected to perform similarly. Our sole purpose in applying our method to text is to show its properties, which is hard to do on other databases typically used in this area of research.}

We first mined all tiles covering a number of documents equal to the minimum support threshold mentioned in Table~\ref{datasets} using CHARM \citep{Zaki02CHARM}, and subsequently ran the greedy approximation to the maximum budgeted coverage problem to construct a sorted list of the most interesting of these tiles. As discussed in Sec.~\ref{maxent_for_interestingness}, this sorted list has the property that each set of $k$ top-ranked tiles has close to the maximally achievable self-information given its total description length.

\begin{table}[htb]
\begin{center}
\caption{The left column in this Table shows the sets of words (columns) $J$ and the number of documents $|I|$ containing all these words for the top-15 selected tiles $(I,J)$ by the method described in Sec.~\ref{use_case}, applied to three abstract datasets. The right column gives the results when the tiling databases method from \cite{tiles04} is used.}
\label{tab_results}
\scriptsize
\begin{tabular}{|rl|rl|}
\multicolumn{4}{c}{ICDM}\\\hline
Compression ratio method, $J$ & $|I|$ & Tiling databases, $J$  & $|I|$ \\\hline
classifi machin support vector & 24 & algorithm data & 338 \\
analysi discriminant lda linear & 10 & paper propos & 237 \\
associ database mine rule & 28 & data mine & 279 \\
bayes naiv & 23 & show & 370 \\
algorithm discov frequent mine pattern & 28 & base & 369 \\
nearest neighbor & 20 & result & 359 \\
art state & 22 & approach & 349 \\
cluster data dimensional high subspace & 11 & method & 346 \\
account take & 19 & set & 343 \\
play role & 14 & problem & 330 \\
document text word & 14 & present & 305 \\
exampl learn train & 17 & perform & 265 \\ 
algorithm em expect maximization & 8 & model & 239 \\ 
frequent item itemset mine & 18 & larg & 221 \\
classifi decis tree & 20 & algorithm propos & 271 \\
\hline\multicolumn{4}{c}{}\\
\multicolumn{4}{c}{KDD}\\\hline
Compression ratio method, $J$ & $|I|$ & Tiling databases, $J$ & $|I|$ \\\hline
machin support svm vector & 25 & data paper & 389 \\
art state & 39 & algorithm propos & 246 \\
labeled learn supervised unlabeled & 10 & data mine & 312 \\
associ mine rule & 36 & base method & 202 \\
express gene & 25 & result show & 196 \\
frequent itemset & 28 & problem & 373 \\
graph larg network social & 15 & data set & 279 \\ 
column row & 13 & approach & 330 \\
algorithm faster magnitud order & 12 & model & 301 \\
algorithm data paper propos real synthetic & 27 & present & 296 \\
answer question  & 18 & larg & 286 \\
nearest neighbor  & 13 & applic & 271 \\
classifi featur machin support text vector  & 9 & perform & 266 \\
precis recal  & 14 & real & 255 \\
decis tree  & 33 & inform & 240 \\
\hline\multicolumn{4}{c}{}\\
\multicolumn{4}{c}{Pubmed}\\\hline
Compression ratio method, $J$ & $|I|$ & Tiling databases, $J$ & $|I|$ \\\hline
algorithm data database drug gamma item mgps \ldots & 10 & data mine & 1545 \\
\ldots mine multi poisson report safeti shrinker system &  &  &  \\
chain data express gene polymerase reaction revers & 14 & data method result  & 438 \\
chromatography liquid mass ms spectrometry & 11 & analysi data & 754 \\
est express sequenc tag & 37 & express gene & 369 \\
advers data detect drug mine reactions \ldots  & 11 & base & 713 \\
\ldots report signal spontan  &  &  &  \\
acid amino data mine protein sequenc  & 32 & studi & 638 \\
nucleotide polymorphisms singl snps studi  & 13 & data inform & 577 \\
availability data motivation result this$\_$website  & 35 & develop & 594 \\
artifici network neural  & 36 & database & 570 \\
data express gene pcr rt  & 29 & system & 511 \\
data ii iii iv  & 16 & approach & 510 \\
data database interface user web  & 27 & tool & 497 \\
arabidopsis express gene plant  & 25 & set & 486 \\
care decis health support system  & 15 & high & 469 \\
data map organizing som  & 13 & present & 431 \\
\hline
\end{tabular}
\end{center}
\end{table}

The top-15 tiles as returned by our method are summarized in the left column of Table~\ref{tab_results}. (Note that the choice to show just 15 tiles is arbitrary.) We report the sets of words sorted alphabetically, as well as the size of the set of documents. For comparison, the results of the related method from \cite{tiles04} are shown in the right column.

We argue that our method achieves the most sensible results in terms of non-redundancy and interestingness of the highly ranked tiles, many of them coinciding with major topics and concepts in data mining (ICDM, KDD) and data mining applied to biological problems (Pubmed). In contrast, the tile-based method seems to favor tiles with few but individually frequent items.

Figure~\ref{top-50} shows the compression ratio for the 50 top-ranked tiles in each of the three abstract databases. For reference, we also ran our method on a randomized version of each of these three databases (randomized as described in Sec.~\ref{randomizing}) and we show the resulting compression ratios on the same plots. Although the compression ratio exceeds 1 only for a few top-ranked tiles on the actual databases, it is higher than the largest compression ratio seen on the randomized databases at least for all of the 50 top-ranked tiles for which data is shown in these plots. This corroborates that the top-ranked tiles are indeed meaningful and cannot be attributed to randomness.

\begin{figure}[htb]
\begin{center}
\includegraphics[width=\textwidth]{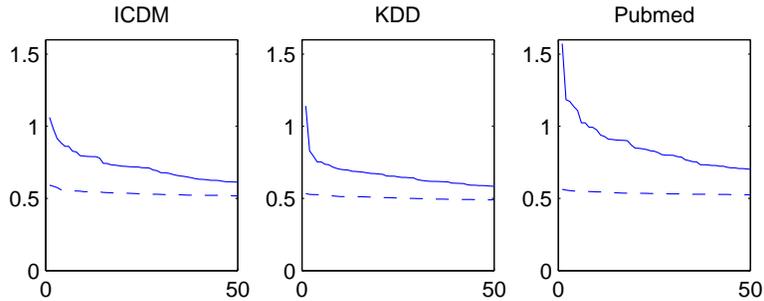}
\caption{The compression ratio of the top-50 ranked tiles as a function of their rank, evaluated on the three abstract databases (full line) as well as on a randomized version of these databases (dashed lines).}\label{top-50}
\end{center}
\end{figure}

We believe that the true assessment of a subjective measure should be the subjective one provided just above. Still, we also conducted an objective comparison between our interestingness measure and the tiling method from \cite{tiles04}. We took each of the 3 abstract databases and artificially embedded large tiles in these databases by appending $k$ rows (`documents') as well as $k$ columns (`words') to it, ensuring that the database cells in the intersection of the $k$ new rows and the $k$ new columns are all equal to $1$. To ensure that the overall statistics of the database are maintained, we furthermore randomly added ones in the cells in the intersection of the existing columns and the new rows, so as to ensure that the column densities remain the same. Similarly, we randomly added ones to the cells in the intersection of the existing rows and new columns to ensure the row frequencies are unchanged. We did this for various tile sizes $k$, namely for $k=5,10,15,$ and $20$.

We then computed a ranking of tiles based on our newly proposed compression ratio interestingness measure, as well as based on the surface of a tile \citep{tiles04}. We then compared the rank of the embedded tile (or if possible a larger one containing it) in the ranking returned by these two interestingness measures. The results, shown in Table~\ref{rankings}, clearly demonstrate that our proposed approach is much more effective in finding the embedded tile.

\begin{table}[htb]
\begin{center}\caption{The ranks of the embedded tile for varying size $k$ in the rankings returned by our compression ratio method and the tiling method from \citet{tiles04}. Note that for tile size $5$, no results are available for the Pubmed database as the support threshold there is $10$ such that the embedded tile is not retrieved. Clearly, the newly proposed method is much more effective at ranking the embedded tile highly.}\label{rankings}
\begin{tabular}{|r|r|c|c|c|c|}\hline
 & $k$ & 5 & 10 & 15 & 20 \\\hline
ICDM & Compression ratio method & 3 & 1 & 1 & 1 \\
& Tiling databases & $>100$ & 71 & 16 & 3 \\\hline
KDD & Compression ratio method & 2 & 1 & 1 & 1 \\
& Tiling databases & $>100$ & 94 & 19 & 5 \\\hline
Pubmed & Compression ratio method & N/A & 1 & 1 & 1 \\
& Tiling databases & N/A & $>100$ & 76 & 19 \\\hline
\end{tabular}
\end{center}
\end{table}

\section{Conclusions}

A significant amount of data mining research has been devoted to the assessment of data mining results when contrasted to prior information, leading to the notion of subjective measures of interestingness. A key task in this endeavour is the formalization of prior information.

In this paper, we have introduced a new modeling approach for prior information based on the maximum entropy principle. Fitting the resulting MaxEnt distribution boils down to a well-posed convex optimization problem. We have also outlined various ways in which the MaxEnt model can be used to contrast patterns in data with prior information, in order to come up with subjective interestingness measures.

Applying this general framework to rectangular databases and prior information on the row and column sums, it turns out that the MaxEnt model can be represented particularly compactly, and specific properties can be exploited to dramatically enhance computational efficiency. Furthermore, we showed how the MaxEnt model can be used efficiently to sample random databases satisfying the prior information. Finally, we also worked out the details of a new interestingness measure for tiles, referred to as the compression ratio, that takes account of row and column sums as prior information.

In our further work we will investigate other interesting use cases of the general framework laid out in Sec.~\ref{framework}. For example, in line with the alternative randomization strategies suggested in \citet{HOV:09}, we will investigate other types of prior information on rectangular databases, such as on the variance within rows or columns (then the domain of the database elements can be chosen to be $\cD=\mathbb{R}$ and the resulting MaxEnt distribution would be a product of Gaussian distributions), the support of certain itemsets (for which also the ideas from \citet{PMS:03} and \citet{Cal:08} will prove useful), or an existing cluster structure in the data. Furthermore, the strategy can be applied to other data types as well, such as categorical data (see end of Sec.~\ref{randomizing}). In parallel, we will investigate the use of the MaxEnt modeling strategy for other types of data such as relational databases.


\subsubsection*{acknowledgements}

This work is supported by the EPSRC grant EP/G056447/1. The author is grateful to Bart Goethals, Nello Cristianini, Akis Kontonasios, and Eirini Spyropoulou for interesting discussions relating to this work.

\bibliographystyle{spbasic}
\bibliography{biblio}

\end{document}